
\documentclass{article}

\usepackage{microtype}
\usepackage{graphicx}
\usepackage{subcaption}
\usepackage{booktabs} 
\usepackage{wrapfig}

\usepackage{hyperref}


\let\origaddcontentsline\addcontentsline
\usepackage[accepted]{icml2026}
\let\addcontentsline\origaddcontentsline


\usepackage{amsmath,amsfonts,bm}









\def\eqref#1{(\ref{#1})}









\def\1{\bm{1}}










\DeclareMathAlphabet{\mathsfit}{\encodingdefault}{\sfdefault}{m}{sl}
\SetMathAlphabet{\mathsfit}{bold}{\encodingdefault}{\sfdefault}{bx}{n}











\newcommand{\KL}{D_{\mathrm{KL}}}




\def\m{{\mathbf m}}
\def\x{{\mathbf x}}







\definecolor{ourblue}{rgb}{0.368,0.507,0.71}
\definecolor{ourorange}{rgb}{0.881,0.611,0.142}
\definecolor{ourgreen}{rgb}{0.56,0.692,0.195}
\definecolor{ourred}{rgb}{0.923,0.386,0.209}
\definecolor{ourviolet}{rgb}{0.528,0.471,0.701}
\definecolor{ourbrown}{rgb}{0.772,0.432,0.102}
\definecolor{ourlightblue}{rgb}{0.364,0.619,0.782}
\definecolor{ourdarkgreen}{rgb}{0.572,0.586,0.}
\definecolor{url}{HTML}{d95225}
\definecolor{bloodred}{HTML}{B00000}

\definecolor{ourcyan2}{rgb}{0.125,0.722,0.804}
\definecolor{ourred2}{rgb}{0.863,0.184,0.047}
\definecolor{ouryellow2}{cmyk}{0,0.16,1.0,0.07}
\definecolor{ourviolet2}{cmyk}{0.55,0.56,0,0.47}
\definecolor{ourorange2}{cmyk}{0,0.46,0.89,0.11}

\usepackage{xspace}
\newcommand{\dtwo}{d2\xspace}
\newcommand{\dtwoanyorder}{d2-AnyOrder\xspace}
\newcommand{\dtwostepmerge}{d2-StepMerge\xspace}



\usepackage{float}
\usepackage{url}
\usepackage{booktabs}
\usepackage{caption}
\usepackage{makecell}
\usepackage{adjustbox} 
\usepackage{wrapfig}
\usepackage{amsmath, amssymb}
\usepackage{graphicx}    
\usepackage{subcaption}  
\usepackage{microtype}
\usepackage{multirow}
\usepackage{xcolor}
\usepackage{colortbl}
\usepackage{listings}
\usepackage{amssymb}
\usepackage{mathtools}
\usepackage{amsthm}

\usepackage[capitalize,noabbrev]{cleveref}

\theoremstyle{plain}
\newtheorem{theorem}{Theorem}[section]

\newtheorem{lemma}[theorem]{Lemma}
\newtheorem{corollary}[theorem]{Corollary}
\theoremstyle{definition}
\newtheorem{definition}[theorem]{Definition}
\newtheorem{assumption}[theorem]{Assumption}
\theoremstyle{remark}
\newtheorem{remark}[theorem]{Remark}

\newcommand{\shadecolumnlight}{\cellcolor{gray!10}}

\usepackage[textsize=tiny]{todonotes}

\icmltitlerunning{\dtwo: Improving Reasoning in Diffusion Language Models via Trajectory Likelihood Estimation}

\begin{document}

\twocolumn[
  \icmltitle{\dtwo: Improving Reasoning in Diffusion Language Models via Trajectory Likelihood Estimation}



  \icmlsetsymbol{equal}{*}

  \begin{icmlauthorlist}
    \icmlauthor{Guanghan Wang}{xxx}
    \icmlauthor{Gilad Turok}{xxx}
    \icmlauthor{Yair Schiff}{xxx}
    \icmlauthor{Marianne Arriola}{xxx}
    \icmlauthor{Volodymyr Kuleshov}{xxx}
  \end{icmlauthorlist}

  \icmlaffiliation{xxx}{Cornell University, Cornell Tech}

  \icmlcorrespondingauthor{Guanghan Wang}{gw354@cornell.edu}

  \icmlkeywords{Machine Learning, ICML}

  \vskip 0.3in
]



\printAffiliationsAndNotice{}  

\begin{abstract}

  While diffusion language models (DLMs) have achieved competitive performance in text generation, improving their reasoning ability with reinforcement learning remains an active research area. Here, we introduce \dtwo, a reasoning framework tailored for masked DLMs. Central to our framework is a new policy gradient algorithm that relies on accurate estimates of the sampling trajectory likelihoods. Because computing these likelihoods naively is computationally expensive for masked DLMs, we develop a family of estimators tailored to distinct model classes. For DLMs that support a sampling algorithm called any-order decoding, we propose \dtwoanyorder, which achieves exact trajectory likelihood with a single model pass. Through an empirical study of widely used DLMs, we show that any-order decoding is not universally supported in practice. For standard masked diffusion models, we propose \dtwostepmerge, which approximates the trajectory likelihood, trading off compute for approximation accuracy in an analytically tractable manner. Empirically, \dtwo significantly outperforms widely-used RL baselines when applied to popular DLMs, and sets a new state-of-the-art performance for DLMs on logical reasoning tasks (Countdown and Sudoku) and math reasoning benchmarks (GSM8K and MATH500). We provide the code
  along with a blog post on the project page: \url{https://guanghanwang.com/d2}
\end{abstract}

\section{Introduction}

\begin{figure*}[ht!]
    \centering
    \includegraphics[width=0.98\linewidth]{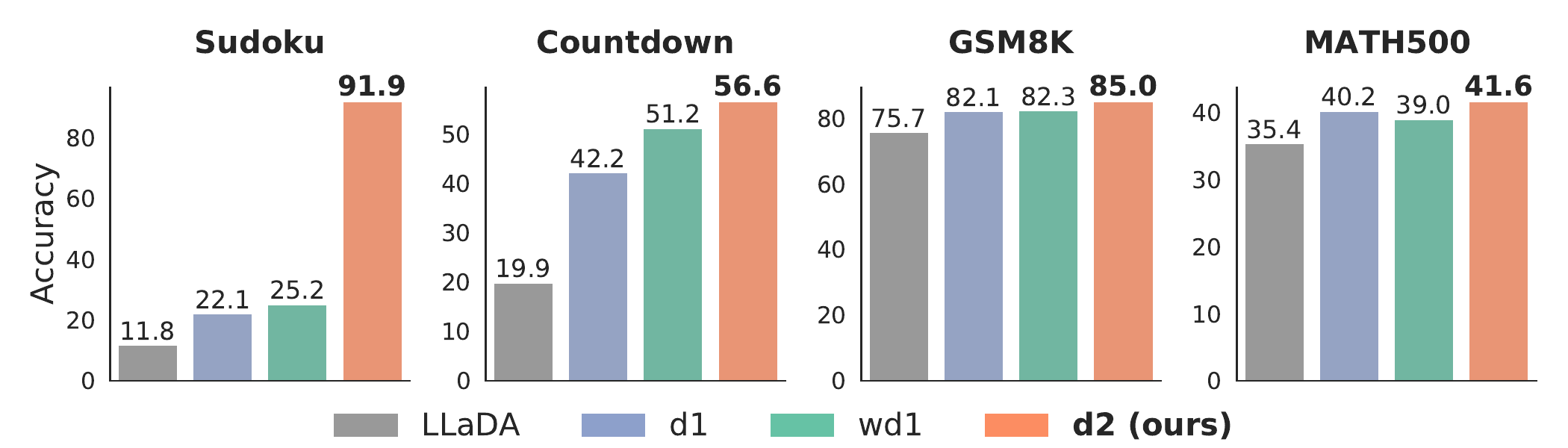}
    \caption{Benchmark performance of different RL post-training algorithms applied to LLaDA-8B-Instruct \citep{nie2025large}.
    Without supervised finetuning (SFT), \dtwo outperforms d1 \citep{zhao2025d1} with SFT and wd1 \citep{tang2025wd1} on all four reasoning benchmarks.}
    \label{fig:graphical_abstract}
\end{figure*}

Diffusion language models (DLMs)~\citep{nie2025large, dream2025, gong2025diffucoder, song2025seed} have recently emerged as a competitive alternative to autoregressive (AR) models for text generation, featuring attractive properties such as controllability~\citep{nisonoff2024unlocking, schiff2024discreteguidance} and fast parallel generation~\citep{wang2025remasking, khanna2025mercury}. Yet, while reinforcement learning (RL) has become the de-facto approach for inducing reasoning in AR LLMs, post-training DLMs using RL remains an active research area. 

Here, we introduce \dtwo, a reasoning framework that improves over previous approaches in the setting of masked DLM, a popular type of diffusion language models.
Central to our framework is a new policy gradient algorithm that requires accurate estimation of the likelihood of sampling trajectories. Because computing these likelihoods naively is computationally expensive for masked DLMs, we develop a family of estimators tailored to distinct model classes.

Concretely, this family consists of two estimators. The first, \dtwoanyorder, computes likelihoods exactly in a single forward pass for DLMs that support a sampling algorithm called any-order decoding (AO-dLLMs); through an empirical study, we show that popular masked diffusion models such as LLaDA~\citep{nie2025large} do not satisfy this property by default. The second, \dtwostepmerge, targets standard masked diffusion models (MDMs) and evaluates trajectories only at specific steps, reducing forward passes while controlling the approximation error; we theoretically study this error via a bound that decreases with the number of steps, quantifying a compute-bias tradeoff. 

Empirically, \dtwo improves reasoning without supervised chain-of-thought finetuning~\citep{wei2022chain}: when applied to DLMs that support any-order decoding, \dtwo significantly outperforms widely-used RL baselines. When applied to LLaDA-8B-Instruct, it surpasses prior diffusion-based RL methods on logical (Countdown, Sudoku) and mathematical (GSM8K, MATH500) benchmarks, establishing a new state of the art for DLMs under matched FLOPs. 

In summary, our work makes the following contributions:
(1) We introduce \dtwo, a reasoning framework for RL post-training of masked DLMs, consisting of a novel GRPO-style policy gradient and a family of trajectory likelihood estimators---highlighting the central role of accurate likelihood estimation for training reasoning DLMs.
(2) For DLMs that support any-order decoding (AO-dLLMs), we propose \dtwoanyorder, which provides unbiased one-pass likelihood estimates. Empirically, \dtwoanyorder significantly outperforms widely-used RL baselines such as diffu-GRPO~\citep{zhao2025d1} and DDPO~\citep{black2023training}.
(3) For standard MDMs that do not natively support any-order decoding, we propose \dtwostepmerge, a deterministic likelihood estimator with an analytically bounded approximation error. When applied to LLaDA-8B-Instruct, \dtwostepmerge achieves state-of-the-art reasoning results on Sudoku, Countdown, GSM8K, and MATH500.

\section{Background}

\subsection{Masked Diffusion Language Models}
Diffusion language models (DLMs) are characterized by two processes.
The first is a pre-defined corruption (also know as forward) process $q$.
This process adds noise to a `clean' token $\x$ drawn from the data distribution to produce progressively noisy latents $\x_t (t \in [0, 1]),$ with noise levels increasing in $t$, and terminates at the fully corrupted latent $\x_1$ drawn from a simple prior.
The second process is the learned denoising (backward) process $p_\theta$, which is trained to undo the corruptions from the forward trajectory. 

Recent discrete diffusion models have focused on forward processes that interpolate between signal and noise \citep{austin2021structured}, and in particular they rely on a specific corruption process known as masked diffusion (referred to as MDLMs henceforth; \citet{sahoo2024simple, shi2024simplified, ou2024your}). The marginals of this process reflect probability mass iteratively moving away from the data and towards a special mask token $\m$:
    $q(\x_t \mid \x) = \alpha_t \x + (1-\alpha_t)\m,$
where $\alpha_t$ is a noise schedule monotonically decreasing in $t$. Given a data token sequence $\x_0^{1:L}$, the backward process is learned by optimizing the following variational lower bound:
\vspace{-5pt}
\begin{equation}
\vspace{-5pt}
    \mathcal{L}_{\textnormal{MDLM}} = \mathbb{E}_{\x_t^{1:L} \sim q(\cdot \mid \x_0^{1:L})}\Bigg[\frac{\alpha_t^\prime}{1 - \alpha_t}\sum_{l=1}^L\log p_\theta(\x_0^l \mid \x_t^{1:L})\Bigg].
\vspace{-10pt}
\end{equation}

\subsection{Any-order Autoregressive Models}

\citet{hoogeboom2021autoregressive} and \citet{ou2024your} demonstrate that the MDLM training objective can be transferred into an any-order autoregressive model (AO-ARM) variant, thus enabling the model to generate sequences autoregressively but not necessarily in a left-to-right dependency. In particular, denoting $\sigma$ as a permutation of integers $1, \ldots, L$, and $S(D)$ as the set of all possible permutations, we have:
\vspace{-10pt}

\vspace{-10pt}
\begin{align}
\vspace{-5pt}
    \mathcal{L}_{\textnormal{MDLM}} &= \mathcal{L}_{\textnormal{AO-ARM}} = \notag \\ &\mathbb{E}_{\sigma \sim U(S_D)}\Bigg[\sum_{l=1}^L\log p_\theta(\x_0^{\sigma(l)}\mid \x_0^{\sigma(<l)}) \Bigg].
\end{align}


\subsection{Reinforcement Learning with Verifiable Rewards}\label{subsec:rlvr}

Policy gradient methods~\citep{williams1992simple, sutton1999policy} have become a central paradigm for improving the reasoning ability of large language models during post-training~\citep{ouyang2022training, ahmadian2024back, bai2022training, li2023remax}. Starting from a pre-trained model $\pi_{\text{ref}}$, reinforcement learning-based reasoning algorithms optimize a new policy network $\pi_\theta$ by ascending the gradient of the expected reward $r$ for completions generated from $\pi_\theta$ conditioned on an input question $\mathbf{q}$:
\vspace{-5pt}

\begin{equation} 
\vspace{-5pt}
    \nabla_\theta \mathbb{E}_{\mathbf{x}^{1:L} \sim \pi_\theta(\cdot \mid \mathbf{q})} \big[ r(\mathbf{x}^{1:L}, \mathbf{q}) \big], 
\end{equation}

where $\mathbf{x}^{1:L}$ denotes a model-generated answer consisting of $L$ tokens.
A widely used approach in this context is Group Relative Policy Optimization (GRPO; \citet{shao2024deepseekmath}), which provides a computationally efficient and low-variance estimator for policy updates.
For each query $\mathbf{q}$, GRPO samples a group of $G$ candidate answers $\{\mathbf{x}^{1:L_{(1)}}_{(1)}, \mathbf{x}^{1:L_{(2)}}_{(2)}, \ldots, \mathbf{x}^{1:L_{(G)}}_{(G)}\}$ from a stale policy $\pi_{\text{old}}$.
Each answer is then assigned an advantage value $A_{(i)} \coloneqq \frac{r(\mathbf{x}^{1:L_{(i)}}_{(i)}, \mathbf{q}) - \text{mean}\{r(\mathbf{x}^{1:L_{(j)}}_{(j)})\}_{1 \leq j \leq G}}{\text{std}\{r(\mathbf{x}^{1:L_{(j)}}_{(j)})\}_{1 \leq j \leq G}}.$ GRPO employs the following clipped objective to optimize AR models:


\vspace{-20pt}
\begin{equation}
\small
\begin{split}
    -&\mathbb{E}_{\mathbf{x}^{1:L} \sim \pi_{\text{old}}(\cdot \mid \mathbf{q})} \bigg[\frac{1}{L}\sum_{l=1}^{L} \min( \rho^l A^l, \, \text{clip}(\rho^l, 1-\epsilon, 1+\epsilon)A^l) \\ &+\beta D_\mathrm{KL}\!\left(\pi_\theta(\mathbf{x}^{1:L}\mid \mathbf{q}) \,\Vert\, \pi_{\text{ref}}(\mathbf{x}^{1:L} \mid \mathbf{q})\right) \bigg],
\end{split}
\end{equation}
\vspace{-10pt}

where $\rho^l \coloneqq \tfrac{\pi_\theta(\mathbf{x}^{l} \mid \mathbf{x}^{<l}, \mathbf{q})}{\pi_{\text{old}}(\mathbf{x}^{l} \mid \mathbf{x}^{<l}, \mathbf{q})}$ denotes the per-token importance ratio, and the same advantage value is assigned to each token in a sequence. The clipping parameter $\epsilon$ constrains policy updates within a trust region, and the KL regularization term penalizes divergence from the reference policy $\pi_{\text{ref}}$. Importantly, for standard AR models, due to the causal structure of their attention masks, the summation over $L$ tokens can be efficiently computed in a single model pass.

\section{Method}

\subsection{Reinforcement Learning Objective for DLMs}

In contrast to autoregressive (AR) models, whose likelihood accurately factorizes across token positions, the exact likelihood of diffusion language models (DLMs) is computationally intractable.
This structural difference renders it theoretically unjustified to directly apply the AR policy gradient formula to DLMs, and makes the derivation of a policy gradient for DLMs a non-trivial problem. 

In this section, we introduce our derivation from the policy gradient formula to a modern GRPO RL objective for masked DLMs.
To begin, we define masked DLMs' policy gradient objective with respect to the final denoised tokens $\mathbf{x}_0^{1:L}$:
\begin{equation}
    \nabla_\theta\mathcal{J}(\theta)  = 
    \nabla_\theta \mathbb{E}_{\mathbf{x}_0^{1:L} \sim \pi_\theta(\cdot \mid \mathbf{q})} \big[r(\mathbf{x}_0^{1:L}, \mathbf{q})\big].
\end{equation}

Inspired by \citet{black2023training}, we marginalize the likelihood over time latents. Moreover, we introduce importance sampling~\citep{kakade2002approximately} to allow reusing of trajectories generated by a stale policy $\pi_{\text{old}}$, which is widely adopted given the computational cost of on-policy sampling. Based on these tricks, we state the following theorem to further simplify $\nabla_\theta\mathcal{J}(\theta)$ (see detailed proof in Appendix \ref{thm:rl_objective}).
\begin{theorem}
\label{thm:rl_objective}
At $\theta = \theta_{\text{old}}$, $\nabla_\theta\mathcal{J}(\theta)$ admits the following decomposition over latent diffusion steps:
\begin{equation}
\label{eq:diffusion_reinforce}
\nabla_\theta \, \mathbb{E}_{\mathbf{x}_{0:T}^{1:L} \sim \pi_{\textnormal{old}}(\cdot \mid \mathbf{q})} 
\bigg[r(\mathbf{x}_0^{1:L}, \mathbf{q}) \sum_{t=0}^{T-1}\sum_{l=1}^{L} \mathbf{1}_{t,l}\cdot\rho_t^l\bigg],
\end{equation}
where $\mathbf{1}_{t,l} \coloneqq \mathbf{1}_{\{\x^l_{t+1} = \text{m}, \x^l_t \neq \text{m}\}}$, and $\rho_t^l \coloneqq \frac{\pi_\theta(\mathbf{x}_t^l \mid \mathbf{x}_{t+1}^{1:L}, \mathbf{q})}{\pi_{\textnormal{old}}(\mathbf{x}_t^l \mid \mathbf{x}_{t+1}^{1:L}, \mathbf{q})}$.
\end{theorem}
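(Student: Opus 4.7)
The plan is to start from the trajectory-level policy gradient, apply the REINFORCE log-derivative identity, introduce importance sampling against $\pi_{\text{old}}$, and then exploit the special structure of the masked-diffusion backward kernel to eliminate every term that carries no gradient signal.

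First, I would lift the expectation over $\mathbf{x}_0^{1:L}$ to an expectation over the full reverse trajectory $\mathbf{x}_{0:T}^{1:L}$ under $\pi_\theta$ (the reward depends only on the terminal $\mathbf{x}_0$, so this is a harmless marginalization). Applying the log-derivative trick gives
$\nabla_\theta \mathcal{J}(\theta) = \mathbb{E}_{\pi_\theta(\mathbf{x}_{0:T}^{1:L}\mid \mathbf{q})}\!\left[r(\mathbf{x}_0^{1:L},\mathbf{q}) \, \nabla_\theta \log \pi_\theta(\mathbf{x}_{0:T}^{1:L}\mid \mathbf{q})\right].$
To switch to an expectation under $\pi_{\text{old}}$, I would multiply and divide by the trajectory ratio $\pi_\theta/\pi_{\text{old}}$. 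Evaluating at $\theta=\theta_{\text{old}}$ makes the ratio identically $1$ and turns $\nabla_\theta \log \pi_\theta$ into $\nabla_\theta\bigl[\pi_\theta/\pi_{\text{old}}\bigr]$, which is what ultimately yields the per-step importance weights in \eqref{eq:diffusion_reinforce}.

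Next, I would factorize the trajectory likelihood using the Markov property of the reverse chain together with the token-wise independence of the masked-diffusion backward kernel,
$\pi_\theta(\mathbf{x}_{0:T}^{1:L}\mid \mathbf{q}) = p(\mathbf{x}_T^{1:L}) \prod_{t=0}^{T-1}\prod_{l=1}^{L} \pi_\theta(\mathbf{x}_t^l \mid \mathbf{x}_{t+1}^{1:L}, \mathbf{q}),$
so that $\nabla_\theta \log \pi_\theta$ splits into a double sum over diffusion steps and token positions. The key step, which I expect to be the main obstacle, is to argue that every per-token factor is gradient-free except on mask-to-unmask transitions. Specifically, I would enumerate the three cases for the absorbing-state kernel: (i) if $\mathbf{x}_{t+1}^l \neq \mathbf{m}$ the token is copied deterministically and the factor equals $1$; (ii) if $\mathbf{x}_{t+1}^l = \mathbf{m}$ and $\mathbf{x}_t^l = \mathbf{m}$, the probability of remaining masked is a purely schedule-dependent constant built from $\alpha_t,\alpha_{t+1}$ and does not involve $\theta$; (iii) only when $\mathbf{x}_{t+1}^l = \mathbf{m}$ and $\mathbf{x}_t^l \neq \mathbf{m}$ does the factor carry a $\theta$-dependent piece, namely the denoiser probability $\pi_\theta(\mathbf{x}_t^l \mid \mathbf{x}_{t+1}^{1:L}, \mathbf{q})$ (up to a schedule prefactor whose logarithm has vanishing gradient). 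This collapses the double sum to exactly the indicator-weighted sum that appears in the statement.

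Finally, I would assemble the pieces: on the surviving terms, $\nabla_\theta \log \pi_\theta(\mathbf{x}_t^l \mid \mathbf{x}_{t+1}^{1:L}, \mathbf{q})$ at $\theta=\theta_{\text{old}}$ coincides with $\nabla_\theta \bigl[\pi_\theta(\mathbf{x}_t^l\mid\mathbf{x}_{t+1}^{1:L},\mathbf{q}) / \pi_{\text{old}}(\mathbf{x}_t^l\mid\mathbf{x}_{t+1}^{1:L},\mathbf{q})\bigr]$; since the outer expectation is taken under $\pi_{\text{old}}$ which is independent of $\theta$, I can pull $\nabla_\theta$ outside the expectation, recovering the right-hand side of the theorem. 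Beyond the case analysis in the previous paragraph, the only remaining subtlety is justifying the interchange of $\nabla_\theta$ with the expectation; this follows from a standard dominated-convergence argument since the support of the trajectories does not depend on $\theta$ and each per-step importance weight is smooth in $\theta$.
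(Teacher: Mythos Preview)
Your proposal is correct and follows essentially the same route as the paper's proof: lift to the full reverse trajectory, introduce importance sampling against $\pi_{\text{old}}$, factorize via the Markov and token-independence structure, and observe that only mask-to-unmask transitions carry $\theta$-dependence. The one cosmetic difference is that you invoke the log-derivative (REINFORCE) identity to turn the trajectory product into a sum of score terms, whereas the paper differentiates the product of per-token ratios directly via the product rule and then collapses the cross-product factors to $1$ at $\theta=\theta_{\text{old}}$; the two mechanisms are equivalent.
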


\begin{remark}
In practice, sequences are sampled once from $\pi_{\textnormal{old}}$ and reused for multiple gradient updates.
After the first gradient update, the equivalence in Theorem \ref{thm:rl_objective} is no longer valid and is just an approximation of the true policy gradient.
However, this approximation is justified as long as $\theta$ remains close to $\theta_{\textnormal{old}}$, which is typically enforced by restricting the size of each policy update.
\end{remark}

To stabilize training, we adapt Eq.~\eqref{eq:diffusion_reinforce} by replacing rewards with advantages~\citep{williams1992simple, sutton1999policy}, introducing a clipped trust region~\citep{schulman2015trust}, and adding a regularization term penalizing divergence from a reference policy~\citep{schulman2017proximal}. Similar to how GRPO averages over sequence lengths, we add a $\frac{1}{L}$ regularization term to cancel out the impact of different sequence lengths within a group. Overall, these operations lead to the GRPO objective for masked diffusion language models.

\begin{corollary}
The GRPO objective for masked DLM is given by


\vspace{-20pt}
\begin{equation}
\small
\vspace{-10pt}
\begin{split}
&-\mathbb{E}_{\x_0^{1:L} \sim \pi_{\textnormal{old}}} \bigg[\sum_{t=0}^{T-1}\frac{1}{L}\sum_{l=1}^{L} \mathbf{1}_{t, l} \min(\rho_t^l A^l, \, \textnormal{clip}(\rho_t^l, 1-\epsilon, 1+\epsilon)A^l) \\ & +\beta D_\mathrm{KL}\!\left(\pi_\theta(\mathbf{x}_{0:T}^{1:L}\mid \mathbf{q}) \,\Vert\, \pi_{\textnormal{ref}}(\mathbf{x}_{0:T}^{1:L} \mid \mathbf{q})\right) \bigg].
\end{split}
\label{eq:diffusion_grpo}
\end{equation}
\end{corollary}

\subsection{Estimating the Policy Gradient for DLMs}

Evaluating and optimizing the GRPO objective requires computing the likelihoods $\pi_{\theta}(\mathbf{x}_{0:T}^{1:L}), \pi_{\textnormal{old}}(\mathbf{x}_{0:T}^{1:L}), \pi_{\textnormal{ref}}(\x_{0:T}^{1:L})$ of samples $\mathbf{x}_{0:T}^{1:L}$ taken from the policy. While autoregressive models support computing these likelihoods in a single forward pass, a naive approach in a diffusion model requires $T$ forward passes, which is computationally prohibitive.


Here, we introduce two computationally efficient estimators, each tailored to a distinct class of masked DLMs (Table~\ref{tab:d2_overview}). For DLMs that support a sampling algorithm called \textbf{any-order decoding} (AO-dLLMs), we propose the \textbf{\dtwoanyorder} estimator, which computes the trajectory likelihood exactly in a single forward pass. For standard masked diffusion models (MDMs) that do not natively support any-order decoding, we propose the \textbf{\dtwostepmerge} estimator, which approximates the trajectory likelihood using multiple forward passes with an analytically bounded approximation error (Theorem~\ref{thm:main_bound}). We empirically demonstrate in Section~\ref{sec:4.1_d2_stepmerge} that \dtwostepmerge achieves a superior trade-off between performance and total compute budget compared to existing baselines.

\begin{table}[h]
\centering
\caption{The two algorithms in \dtwo and their target model classes.}
\label{tab:d2_overview}
\resizebox{\columnwidth}{!}{%
\begin{tabular}{lll}
\toprule
Algorithm & Model class & Estimator \\
\midrule
\dtwoanyorder & AO-dLLMs & Exact (one pass) \\
\dtwostepmerge & Standard MDMs & Approximate (multiple passes) \\
\bottomrule
\end{tabular}
}
\end{table}

\subsubsection{\dtwoanyorder Estimator }\label{subsubsection:d2_anyorder}

We now introduce \dtwoanyorder, which computes exact trajectory likelihoods in a single forward pass. This estimator is compatible with transformer-based masked DLMs 
from which we sample using a simple algorithm called any-order decoding. 

Recall that our goal is to estimate the 
likelihoods $\pi(\mathbf{x}_{0:T}^{1:L})$ of trajectories $\mathbf{x}_{0:T}^{1:L}$ sampled from the policy.
Using connections between masked and any-order models, we can express the likelihood of a trajectory as
$\pi(\mathbf{x}_{0:T}^{1:L}) = \prod_{l=1}^{L}\pi(\mathbf{x}_0^{\sigma(l)} \,\mid \mathbf{x}_0^{\sigma(<l)})$. Here, instead of a randomly sampled permutation, $\sigma$ denotes the decoding order of the sample tokens. In other words, $\x_0^{\sigma(<l)}$ denotes the tokens decoded before $\x_0^l$.
Our goal is then to compute each conditional likelihood term $\pi (\mathbf{x}_0^{\sigma(l)} \,\mid \mathbf{x}_0^{\sigma(<l)})$ in one forward pass.  

\textbf{\dtwoanyorder.}~ To efficiently compute the $L$ marginals, we propose a technique for packing them into a single sequence of length $2L$ that will be processed in parallel by a transformer. Specifically, we construct a $2L$-length sequence
$\mathbf{x}_0^{1:L} \;\oplus\; \mathbf{m}^{L+1:2L},$
where $\oplus$ denotes concatenation along the sequence dimension and $\mathbf{m}^{L+1:2L}$ are mask tokens. The positional encodings are assigned as
$\mathrm{pos}_l = l \bmod L$ for $1 \leq l \leq 2L$, so that each token–mask pair shares the same position index.
We then define the attention mask so that a clean token $\mathbf{x}_0^{\sigma(l)}$ attends to $\mathbf{x}_0^{\sigma(\leq l)}$; mask tokens $\mathbf{m}^{L+\sigma(l)}$ attend to $\mathbf{x}_0^{\sigma(<l)} \cup \m^{L + \sigma(l)}$.
We denote the resulting likelihood estimate as $\pi^{\mathrm{AO}}\!\left(\mathbf{x}_0^l \,\middle|\,
\mathbf{x}_0^{1:L} \oplus \mathbf{m}^{L+1:2L}\right)$. More details are provided in Figure \ref{fig:2L_likelihood_evaluation} and Appendix~\ref{app:subsec_trajectory_with_prompt}.

\begin{figure}
  \centering
  \includegraphics[width=0.8\columnwidth]{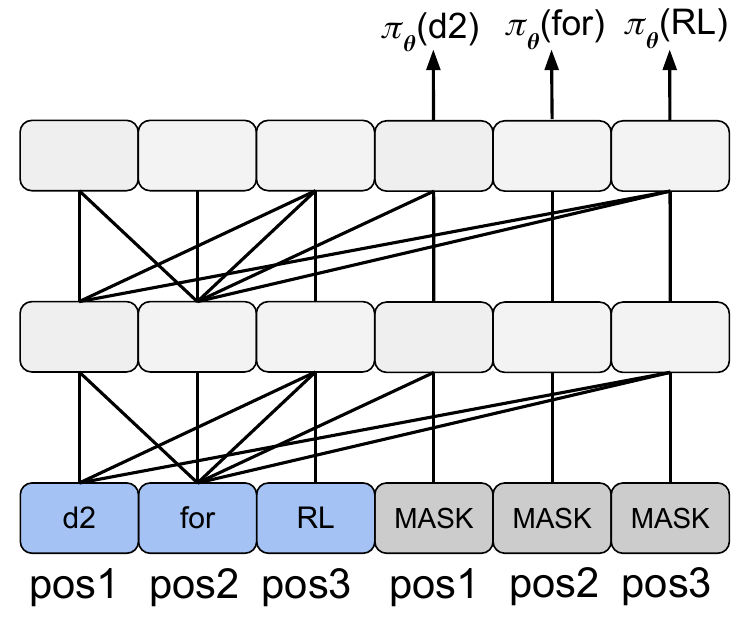}
  \captionsetup{skip=1pt}
  \caption{Illustration of one-shot trajectory likelihood evaluation. We depict attention with query tokens (one layer up) attending to keys / values (one layer below) via an undirected connected line. The output at each position is depicted with a directed arrow. ``pos" refers to positional encoding index. We use a three token example where the decoding order is "for$\rightarrow$d2$\rightarrow$RL".
  }
  \label{fig:2L_likelihood_evaluation}
  \vspace{-10pt}
\end{figure}

When using this estimator, we train the policy network with the following loss function:
\begin{equation}
\vspace{-10pt}
\small
\label{eq:diffusion_d2_anyorder}
\begin{split}
    &-\mathbb{E}_{\x_{0:T}^{1:L}\sim\pi_{\text{old}}(\cdot \mid \mathbf{q})}\bigg[\frac{1}{L} \sum_{l=1}^{L} \min\left( \rho_{n,l}^{\textnormal{AO}} A^l, \text{clip}(\rho_{n,l}^{\textnormal{AO}}, 1\text{-}\epsilon, 1\text{+}\epsilon) A^l \right) \\
    &\qquad + \beta D_\mathrm{KL}\left(\pi_\theta(\mathbf{x}_{0:T}^{1:L}\mid \mathbf{q}) \,\Vert\, \pi_{\text{ref}}(\mathbf{x}_{0:T}^{1:L} \mid \mathbf{q})\right) \bigg], 
\end{split}
\end{equation}
where $\rho_{n,l}^{\textnormal{AO}} \coloneqq \frac{\pi_\theta^{\textnormal{AO}}(\mathbf{x}_0^l \mid \mathbf{x}_0^{1:L} \oplus \mathbf{m}^{L+1:2L}, \mathbf{q})}{\pi_{\textnormal{old}}^{\textnormal{AO}}(\mathbf{x}_0^l \mid \mathbf{x}_0^{1:L} \oplus \mathbf{m}^{L+1:2L}, \mathbf{q})}$. For simplicity, we also refer to the RL algorithm built on the \dtwoanyorder estimator as \dtwoanyorder.

\textbf{When Does The Any-Order Estimator Work?} The above procedure works if the estimated likelihood of each token $\pi^{\mathrm{AO}}\!\left(\mathbf{x}_0^l \,\middle|\,
\mathbf{x}_0^{1:L} \oplus \mathbf{m}^{L+1:2L}\right)$ equals the probability $\pi (\mathbf{x}_0^{\sigma(l)} \,\mid \mathbf{x}_0^{\sigma(<l)})$ of that token during sampling.

\begin{algorithm}[ht]
    \small 
    \caption{Any-Order Decoding}
    \label{alg:any_order_decoding}
    \begin{algorithmic}
    \STATE \textbf{Input:} DLM model $p_\theta$, sequence length $L$. 
    \STATE $\x^{1:L} \leftarrow \m^{1:L}$; $\sigma(1),\dots,\sigma(L) \leftarrow L+1$; $n \leftarrow 0$ 
    \WHILE{$n < L$}
        \FOR{$l=1$ to $L$}
        \STATE $\textnormal{attn}(\x^{\sigma(l)}) \leftarrow \x_0^{\sigma(\leq l)} \; \textnormal{if} \; \x^{\sigma(l)} \neq \m \; \textnormal{else} \; \x_0^{\sigma(<l)}\cup\m^{\sigma(l)}$
        \ENDFOR
        \STATE $\x_0^{l_{1:k}} \sim p_\theta(\cdot \mid \x^{1:L}, \textnormal{attn})$
        \STATE $\x^{l_{1:k}} \leftarrow \x_0^{l_{1:k}}$; $\sigma(l_j)_{j=1}^k \leftarrow n+j$; $n \leftarrow n + k$
    \ENDWHILE
    \STATE \textbf{return} $\x^{1:L}$
    \end{algorithmic}
\end{algorithm}


This property holds by construction when we sample from a masked DLM using the simple algorithm illustrated in Figure \ref{fig:dlm_decoding}, which we call any-order decoding (pseudocode provided in Algorithm \ref{alg:any_order_decoding}). At each time step, the algorithm inputs a partially masked sequence $\x^{1:L}$. Notably, we remove the time steps $t$ and instead use the notation $\x^{1:L}$. For each position $l$, $\x^l$ can either be a clean token, i.e., $\x_0^l$ or a mask token, i.e., $\m^l$. At each decoding step, we input $\x^{1:L}$ into a transformer to compute logits at every masked token. Then $k$ token positions are selected for unmasking based on some heuristics (e.g., top-$k$ confidence~\citep{nie2025large} or confidence-threshold~\citep{wu2025fast}). Unmasked tokens at selected positions are sampled and added to the sequence. Crucially, we set the attention mask of the transformer parameterizing the DLM to satisfy two properties:

\vspace{-5pt}
\begin{itemize}
    \item \textbf{Independent Masks:} Mask tokens do not attend to each other: they attend only to unmasked tokens and to themselves. 
    \item \textbf{Order Causality:} Unmasked tokens attend only to tokens decoded at earlier time steps and to themselves.
\end{itemize}
\vspace{-5pt}

\begin{figure}[ht!]
    \centering
    \begin{subfigure}{0.48\textwidth}
        \includegraphics[width=\linewidth]{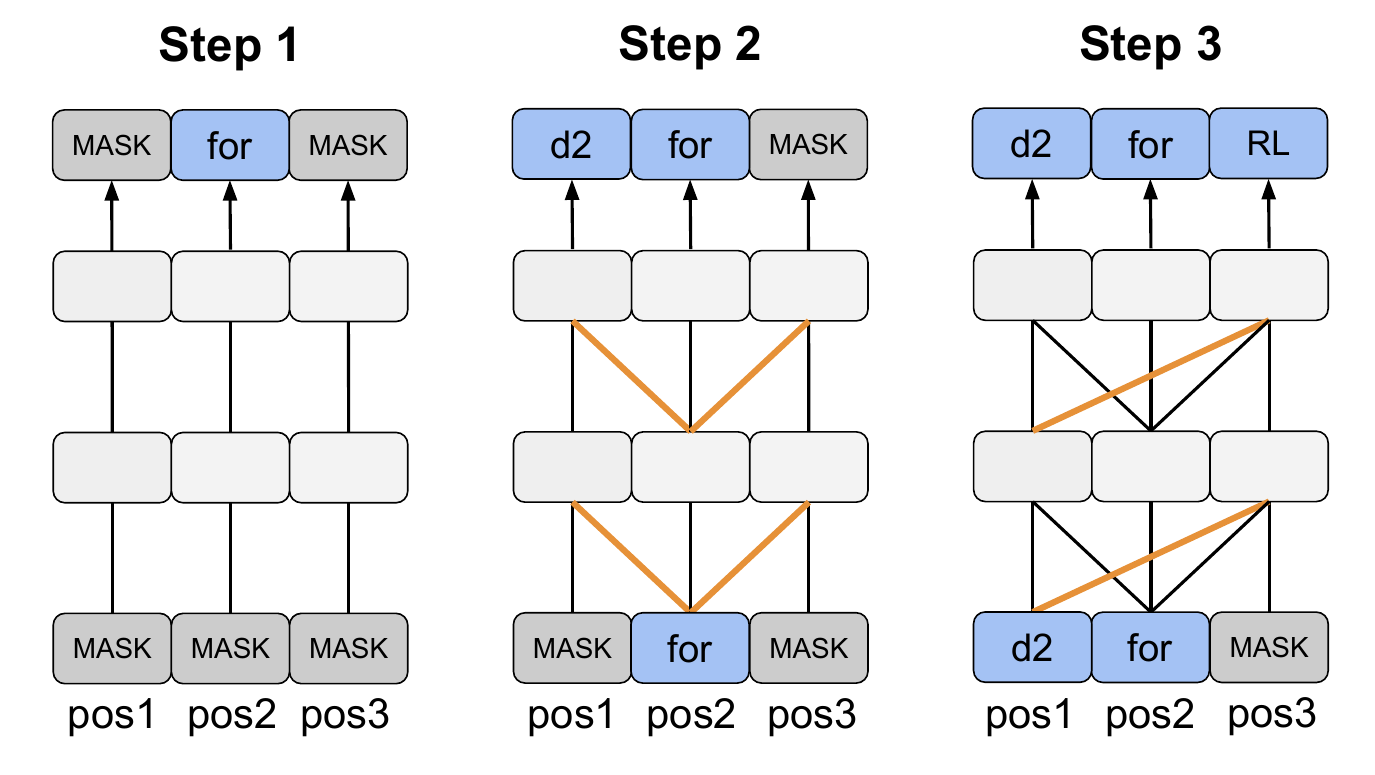}
        \caption{Any-order decoding.}
        \label{subfig:any_order_decoding}
    \end{subfigure}
    \caption{Illustration of the any-order decoding algorithm for masked DLMs. This example follows the setting of Figure \ref{fig:2L_likelihood_evaluation} where three tokens are decoded in the order of "for$\rightarrow$d2$\rightarrow$RL". At each time step, newly added attention relations in any-order decoding are highlighted with red line markers.}
    \label{fig:dlm_decoding}
\end{figure}

\textbf{When Does The Any-Order Estimator Not Work?}
Any-order decoding can be applied to any masked DLM, which always yields samples whose likelihood can afterwards be computed in a single forward pass.
Unfortunately, any-order decoding does not always produce high-quality samples. If the model was not trained with independent masks and order causality, it may not produce good samples when these properties are introduced at inference time.


To further validate this argument, we design an experiment in which we first let LLaDA-8B-Instruct generate sequences from prompts in the GSM8K test set, and then apply \dtwoanyorder to compute their trajectory likelihood. We then compare \dtwoanyorder likelihood with the ground-truth trajectory log-likelihood of LLaDA-8B-Instruct. 
As shown in Table \ref{tab:llada_ao}, when directly applied to LLaDA-8B-Instruct, \dtwoanyorder deviates from the ground truth by an order of magnitude. This discrepancy confirms that standard MDLMs do not inherently support any-order decoding. Building on similar observations, recent studies \citep{sahoo2025esoteric, arriola2026set} have proposed specialized training paradigms for models that natively support any-order decoding, which we utilize to evaluate \dtwoanyorder in Section~\ref{subsec:exp_d2_anyorder}.

\begin{table}
    \centering
    \caption{Average ground-truth per token log-likelihood v.s. the \dtwoanyorder estimates applied to LLaDA-8B-Instruct. We also report the KL divergence between the two likelihood distributions.}
    \label{tab:llada_ao}
    \begin{tabular}{ccc}
        \toprule
         & Per token LL. & $\KL(\pi_{\text{ref}}||\pi_{\text{ao}})$ \\
        \midrule
         ground-truth & -0.128 & \textemdash \\
         \dtwoanyorder & -3.051 & 2.334  \\
         \bottomrule
    \end{tabular}
    \vspace{-10pt}
\end{table}

\begin{figure*}[ht!]
    \centering
    \includegraphics[width=0.98\linewidth]{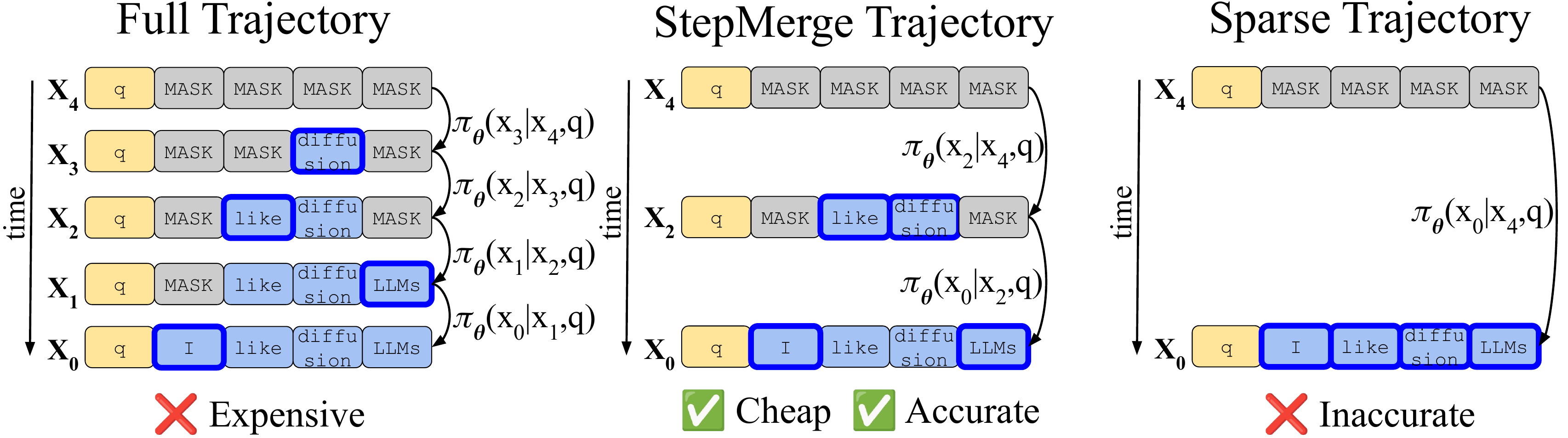}
    \caption{Illustration of our proposed \dtwostepmerge likelihood estimator. In \dtwostepmerge, we cut the trajectory evenly into $N$ time segments and evaluate the likelihood for each segment together. Newly decoded tokens on which we compute the likelihood at the corresponding model forward pass are highlighted.}
    \vspace{-20pt}
    \label{fig:trajectory_clustering}
\end{figure*}

\subsubsection{\dtwostepmerge Estimator}\label{subsubsec:d2_stepmerge}

\begin{figure}
  \centering
  \includegraphics[width=0.4\textwidth]{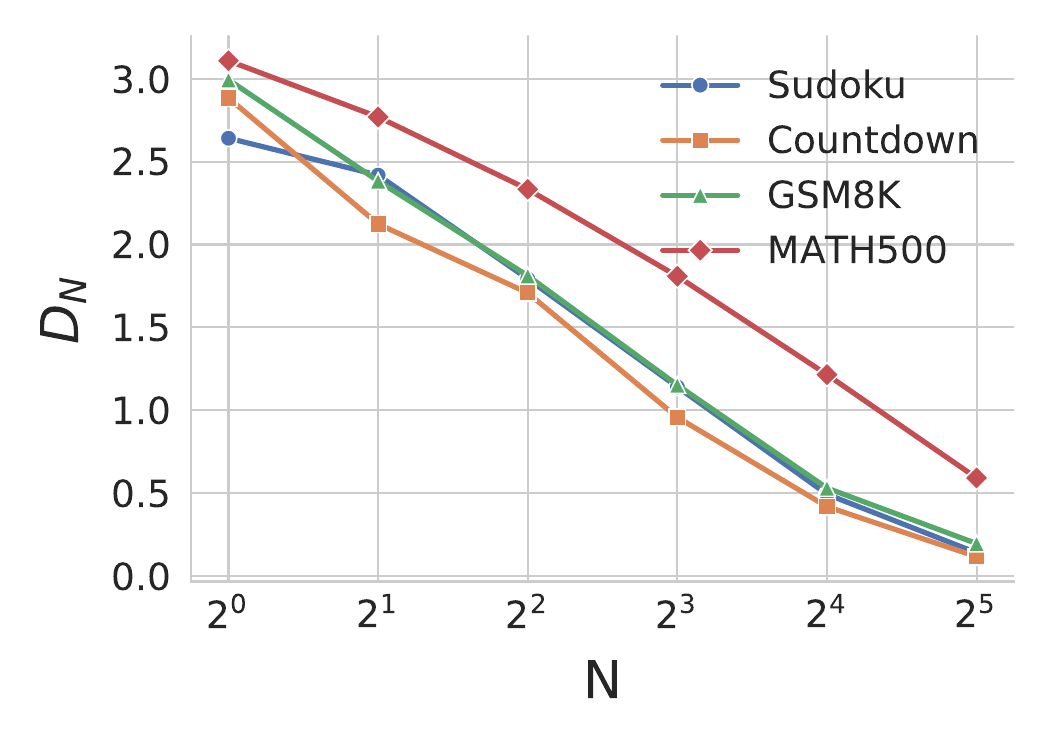}
  \captionsetup{skip=1pt} 
  \caption{$D_N$ of LLaDA-8B-Instruct for varying $N$.}
  \label{fig:DN}
  \vspace{-15pt}
\end{figure}

As noted above, not all DLMs support any-order decoding. For standard masked DLMs, we propose \dtwostepmerge, which approximates the trajectory likelihood with multiple forward passes. 

Here, we reuse the MDLM decomposition of the trajectory likelihood: $\pi(\x_{0:T}^{1:L})=\prod_{t=0}^{T-1}\prod_{l=1}^L \mathbf{1}_{t,l} \cdot \pi(\x_t^l \mid \x_{t+1}^{1:L})$.
Computing this trajectory likelihood naively takes $T$ model passes, rendering it computationally prohibitive. To address this, we draw inspiration from block composite likelihood~\citep{besag1975statistical, lindsay1988composite, varin2011overview}, which approximates intractable joint distributions by products of tractable factors over blocks. Concretely, we cut the sample trajectory of $T$ tokens evenly into $N$ contiguous time segments. For each time segment, we use the output of one model pass as a proxy for token likelihoods within this segment. Formally, the trajectory likelihood is approximated as $\pi(\x_{0:T}^{1:L}) \approx \prod_{n=0}^{N-1}\prod_{l=1}^L\mathbf{1}_{n,l}\cdot \pi(\x_{\frac{nT}{N}}^{l} \mid \x_{\frac{(n+1)T}{N}}^{1:L}),$ where $\mathbf{1}_{n,l} \coloneqq \mathbf{1}_{\{\x^l_{\frac{(n+1)T}{N}} = \text{m}, \x^l_{\frac{nT}{N}} \neq \text{m}\}}$. 

Based on this estimator, we train the policy network with the following GRPO objective:

\vspace{-23pt}
\label{eq:diffusion_d2_bdlm}
\begin{align}
\small
    &-\mathbb{E}_{\pi_{\text{old}}}\bigg[ \sum_{n=0}^{N-1} \frac{1}{L} \sum_{l=1}^{L} \mathbf{1}_{n,l} \min\left( \rho_n^l A^l, \text{clip}(\rho_n^l, 1\text{-}\epsilon, 1\text{+}\epsilon) A^l \right) \notag \\
    &\qquad + \beta D_\mathrm{KL}\left(\pi_\theta(\mathbf{x}_{0:T}^{1:L}\mid \mathbf{q}) \,\Vert\, \pi_{\text{ref}}(\mathbf{x}_{0:T}^{1:L} \mid \mathbf{q})\right) \bigg],
\end{align}

where $\rho_n^l \coloneqq \frac{\pi_\theta(\x^l_{\frac{nT}{N}} \mid \x^{1:L}_{\frac{(n+1)T}{N}}, \mathbf{q})}{\pi_{\text{old}}(\x^l_{\frac{nT}{N}} \mid \x^{1:L}_{\frac{(n+1)T}{N}}, \mathbf{q})}$. For simplicity, we also refer to the RL algorithm built on the \dtwostepmerge estimator as \dtwostepmerge.


\textbf{Understanding the role of $N$.}~ With \dtwostepmerge, we reduce the number of model passes from $T$ to $N$. This raises a natural question: ``what is the effect of $N$ on the accuracy of the policy gradient?'' Since the importance weight is the quotient of two trajectory likelihood evaluations, we further narrow this down to studying the discrepancy between the complete likelihood decomposition of the trajectory and that yielded by our StepMerge strategy. Formally, we quantify the discrepancy $D_N$ between the complete and StepMerge decompositions as:
\begin{equation}
\small
    D_N \coloneqq D_\mathrm{KL}\left( \prod_{t=0}^{T-1} \pi_\theta(\x_t | \x_{t+1}) \,\Vert\, \prod_{n=0}^{N-1} \pi_\theta(\x_{\frac{nT}{N}} | \x_{\frac{(n+1)T}{N}}) \right). \notag
\end{equation}
\vspace{-10pt}



We compute $D_N$ for LLaDA-8B-Instruct~\citep{nie2025large} on the test sets of four datasets (introduced in Section~\ref{sec:4.1_d2_stepmerge}). As shown in Figure~\ref{fig:DN}, $D_N$ decreases monotonically with increasing $N$, indicating that \dtwostepmerge trades off compute for likelihood estimation precision. 

\begin{remark}
    diffu-GRPO~\citep{zhao2025d1} is a special case of \dtwostepmerge where $N=1$. As noted above, $N=1$ produces an inaccurate likelihood estimation and may thus harm the performance of RL. This is consistent with our experimental results shown in Section \ref{sec:4.1_d2_stepmerge}.
\end{remark}

\section{Theoretical Analysis}




In order to quantify the compute-bias trade-off of \dtwostepmerge, in this section, we introduce an upper bound for $D_N$ (defined in Section \ref{subsubsec:d2_stepmerge}). Our bound monotonically decreases as $N$ increases, justifying our observation in Figure \ref{fig:DN}.

\begin{theorem}[Approximation Error Bound]
\label{thm:main_bound}
Suppose $\pi_\theta$ has a fixed schedule where $L$ tokens are unmasked over $T$ timesteps. The KL divergence $D_N$ between the full trajectory and StepMerge approximation is bounded by:
\begin{equation}
\label{eq:stepmerge_bound}
    D_N \leq L \cdot \log \left(\frac{T}{N} + 1\right) + L \cdot \epsilon_{block}.
\end{equation}
where $\epsilon_{block}$ is a constant measuring how much $\pi_\theta$'s token predictions can change when we skip intermediate diffusion steps within a block.
\end{theorem}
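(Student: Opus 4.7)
The plan is to bound $D_N(\pi_\theta)$ by decomposing the KL divergence block by block and then splitting each block's contribution into a combinatorial ``schedule ambiguity'' term and a model ``approximation sensitivity'' term. First I would apply the chain rule for KL divergence over the $N$ blocks, leveraging the fact that both the full and the StepMerge factorizations agree on the block-endpoint states $\mathbf{x}_{nT/N}^{1:L}$ and $\mathbf{x}_{(n+1)T/N}^{1:L}$, and differ only in how they account for the intermediate latents inside each block. This reduces the problem to bounding, for each block $n$, the KL between (i) the conditional that tracks every intermediate single-step unmasking event $\pi_\theta(\mathbf{x}_t^l \mid \mathbf{x}_{t+1}^{1:L}, \mathbf{q})$ for all $t$ in the block, and (ii) the merged conditional $\pi_\theta(\mathbf{x}_{nT/N}^l \mid \mathbf{x}_{(n+1)T/N}^{1:L}, \mathbf{q})$ that jumps across the entire block in one shot.

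Next, for each block I would split the log-ratio into two contributions. The first is a \textbf{scheduling} cost: given that a set of token positions become unmasked within a block of length $T/N$, the full factorization implicitly pays for locating each unmasking event in time, while the merged factorization treats the block as a single joint transition. A counting argument that enumerates at most $T/N$ candidate timesteps plus the ``still-masked at block end'' outcome yields an entropy bound of $\log(T/N + 1)$ per token position per block. Since every token is unmasked in exactly one block, summing over positions produces the $L \cdot \log(T/N + 1)$ term. The second is an \textbf{approximation} cost: even after fixing the unmasking schedule, $\pi_\theta(\mathbf{x}_t^l \mid \mathbf{x}_{t+1}^{1:L}, \mathbf{q})$ and $\pi_\theta(\mathbf{x}_{nT/N}^l \mid \mathbf{x}_{(n+1)T/N}^{1:L}, \mathbf{q})$ differ because they condition on different partially-decoded contexts within the block. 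I would introduce $\epsilon_{block}$ as an upper bound on the per-token KL between these two conditionals, capturing a Lipschitz-type stability of $\pi_\theta$'s predictions under partial intra-block unmasking. Summing $\epsilon_{block}$ over the $L$ positions yields the second term $L \cdot \epsilon_{block}$, and combining the two gives \eqref{eq:stepmerge_bound}.

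The main obstacle will be making the block-level decomposition rigorous in the presence of a random unmasking schedule: the identity of ``positions unmasked in block $n$'' is itself trajectory-dependent. I would handle this by first conditioning on the schedule (the decoding times $t_l$ for each position $l$) so that the enumeration argument for $\log(T/N+1)$ and the per-token bound $\epsilon_{block}$ apply on a fixed schedule, and then invoking the tower property to average out the schedule randomness; convexity of KL ensures the bound survives this averaging. A secondary subtlety is to state a clean, model-agnostic definition of $\epsilon_{block}$ that is finite and interpretable, which I would formalize as an assumption about $\pi_\theta$'s prediction stability when additional tokens inside a block are revealed, deferring the detailed estimate to Appendix~\ref{app:proof_bound}.
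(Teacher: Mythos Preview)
Your proposal is correct and follows essentially the same route as the paper: a block-wise chain-rule decomposition, then a two-term split into a scheduling/timing cost (bounded by the $\log(T/N+1)$ entropy of a token's unmasking time within a block) and a value/approximation cost controlled by $\epsilon_{block}$, summed over the $L$ token positions. The paper formalizes your split via an explicit factorization $\pi_\theta=\tau\cdot\nu_\theta$ into timing and value distributions, identifies the timing term as a conditional mutual information $I(S_t;\mathbf{x}_{t+1}\mid\mathbf{x}_{(n+1)T/N})$ before bounding it by entropy, and defines $\epsilon_{block}$ as a pointwise worst-case log-ratio of softmax outputs (a logit-stability constant) rather than a per-token KL---but these are presentational refinements of the same argument, and the paper sidesteps your ``random schedule'' obstacle by simply assuming a fixed $k$-tokens-per-step schedule rather than averaging over it.
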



\section{Experiments}

\subsection{\dtwoanyorder}\label{subsec:exp_d2_anyorder}

\subsubsection{Eso-LM}

\textbf{Experimental Setup.}~We first evaluate \dtwoanyorder on Eso-LM~\citep{sahoo2025esoteric}, a class of DLMs that naturally support any-order decoding. We employ a 190M-parameter model pre-trained on OpenWebText~\citep{Gokaslan2019OpenWeb}. Following~\citet{singhal2025general}, we assess the capacity of \dtwoanyorder to steer the model toward generating toxic content---a rare behavior in the base model and a critical benchmark for red-teaming scenarios. Toxicity is measured using a pre-trained classifier~\citep{logacheva-etal-2022-paradetox} as the reward model, where the objective is to maximize the toxicity score. 

To conduct a fair comparison, we record the compute budget of different methods (measured in Floating-Point Operations, i.e., FLOPs) and evaluate them at every FLOP interval. For each RL-finetuned checkpoint, we generate 512 sequences and report the average toxicity score generated by the reward model.

\vspace{-5pt}
\begin{table}[h]
\centering
\caption{Toxicity Score vs. FLOPs. Our \dtwoanyorder GRPO approach significantly dominates the DDPO baseline in toxicity steering for a given compute budget.}
\label{tab:toxicity_flops}
\begin{small}
\begin{tabular}{lcccccc}
\toprule
Method & \multicolumn{5}{c}{FLOPs $\times 10^{17}$} \\
& 0.00 & 0.25 & 0.50 & 0.75 & 1.00 & 1.25 \\
\midrule
DDPO toxicity & -9.2 & -9.2 & -9.1 & -8.9 & -8.9 & -8.6 \\
\dtwo toxicity (ours) & -9.2 & \textbf{-8.5} & \textbf{-7.3} & \textbf{-5.5} & \textbf{-2.7} & \textbf{-0.7} \\
\bottomrule
\end{tabular}
\end{small}
\vspace{-5pt}
\end{table}

\textbf{Baselines.}~Since this setting involves prompt-free, open-ended generation, the likelihood evaluation in diffu-GRPO is uninformative. Consequently, following~\citet{su2025iterative}, we adopt DDPO~\citep{black2023training}---a widely used policy gradient RL algorithm for diffusion models---as our primary baseline. Implementation details for the baseline are provided in Appendix~\ref{app:subsubsec:eso-lm-details}.

\textbf{Results.}~As illustrated in Table~\ref{tab:toxicity_flops}, \dtwoanyorder substantially outperforms DDPO under equivalent compute budgets. Our method achieves near-maximum toxicity scores, i.e., 0, whereas DDPO remains below -8.

\begin{table}[h]
\centering
\caption{Comparison of the toxicity score dynamics between \dtwoanyorder and \dtwostepmerge.}
\label{tab:toxicity_flops_ablation}
\begin{small}
\begin{tabular}{lcccccc}
\toprule
Method & \multicolumn{5}{c}{FLOPs $\times 10^{17}$} \\
& 0.00 & 0.25 & 0.50 & 0.75 & 1.00 & 1.25 \\
\midrule
\dtwostepmerge & -9.2 & -8.7 & -8.0 & -6.6 & -4.3 & -1.5 \\
\dtwoanyorder & -9.2 & \textbf{-8.5} & \textbf{-7.3} & \textbf{-5.5} & \textbf{-2.7} & \textbf{-0.7} \\
\bottomrule
\end{tabular}
\end{small}
\vspace{-5pt}
\end{table}

\textbf{Ablation: \dtwoanyorder v.s. \dtwostepmerge.}~We evaluate the impact of likelihood estimation exactness on RL by comparing \dtwoanyorder with its approximate counterpart, \dtwostepmerge. For \dtwostepmerge, we apply $N=8$ due to its superior empirical performance. As shown in Table~\ref{tab:toxicity_flops_ablation}, \dtwoanyorder consistently outperforms \dtwostepmerge across all compute budgets. This empirical result confirms that the exact trajectory likelihoods enabled by \dtwoanyorder provide a higher-fidelity signal for RL optimization compared to segment-based approximations.

\subsubsection{Set Diffusion}

\textbf{Experimental Setup.}~We next evaluate \dtwoanyorder on a model from a different architecture family. We use the Qwen3-1.7B checkpoint from~\citet{arriola2026set}, which finetunes Qwen3 as an any-order dLLM via an AO-ARM objective they propose. As reported in~\citet{arriola2026set}, AO finetuning yields substantially better downstream accuracy than Block diffusion finetuning on the same base model (Table~\ref{tab:qwen3_set_diffusion}, top), motivating our focus on the AO variant for RL post-training. We compare \dtwoanyorder against diffu-GRPO under matched compute on GSM8K.

\begin{table}[h]
\centering
\caption{GSM8K accuracy with Qwen3-1.7B as the base model. SFT results (top) are reported by~\citet{arriola2026set}; RL results (bottom, applied on top of AO SFT) are ours. Best result is \textbf{bolded}.}
\label{tab:qwen3_set_diffusion}
\begin{small}
\begin{tabular}{lc}
\toprule
Method & GSM8K \\
\midrule
Block diffusion SFT~\citep{arriola2026set} & 48\% \\
AO SFT~\citep{arriola2026set} & 62\% \\
\midrule
AO SFT + diffu-GRPO & 63\% \\
AO SFT + \dtwoanyorder & \textbf{67\%} \\
\bottomrule
\end{tabular}
\end{small}
\end{table}

\textbf{Results.}~Under matched compute, \dtwoanyorder reaches 67\% on GSM8K, surpassing diffu-GRPO at 63\%. This validates that the benefits of \dtwoanyorder extend to an additional any-order dLLM architecture beyond Eso-LM.

\subsubsection{Any-Order Causal LLaDA}

\textbf{Experimental Setup.}~We finally evaluate \dtwoanyorder on a larger any-order dLLM finetuned from LLaDA-8B-Instruct using the token-efficient training algorithm of \citet{arriola2026set} (detailed recipe in Appendix \ref{app:subsubsec_any_order_causal_llada_finetune}); we refer to this finetuned model as any-order causal LLaDA. We then compare \dtwoanyorder against diffu-GRPO on GSM8K, evaluating test-set accuracy across fixed intervals of a constant total compute budget (measured in FLOPs).

\begin{figure}
  \centering
  \includegraphics[width=0.4\textwidth]{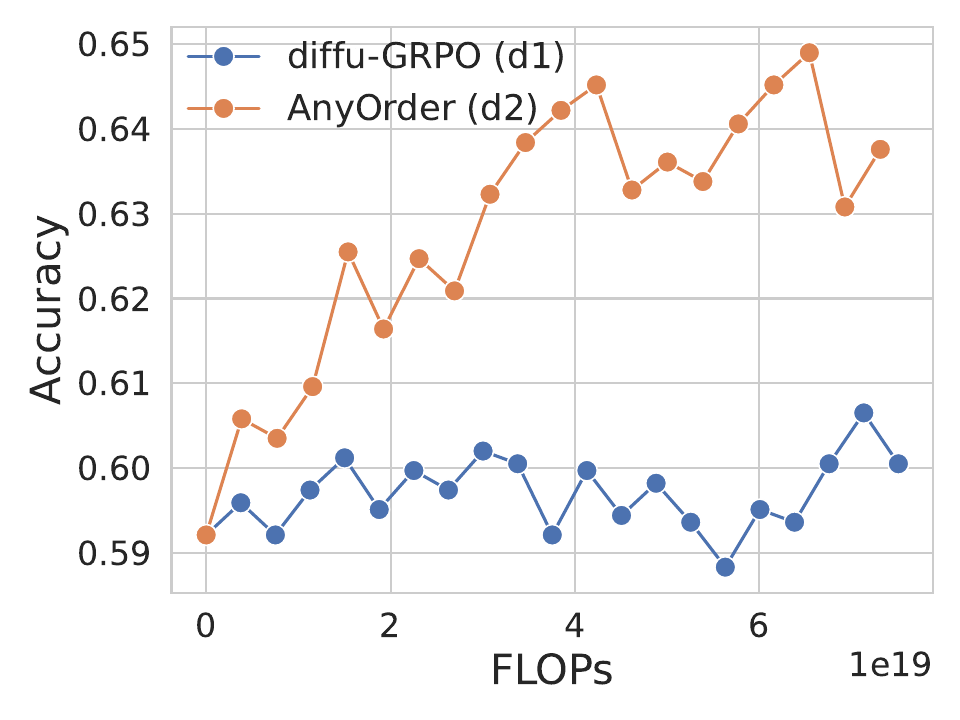}
  \caption{Performance-compute dynamics of \dtwoanyorder and diffu-GRPO on any-order causal LLaDA.}
  \label{fig:d2_anyorder_gsm8k}
  \vspace{-18pt}
\end{figure}

\textbf{Results.}~As illustrated in Figure \ref{fig:d2_anyorder_gsm8k}, \dtwoanyorder consistently improves GSM8K test accuracy within the allocated compute budget, whereas diffu-GRPO fails to achieve meaningful gains. This stark contrast highlights the critical importance of exact trajectory likelihood estimation for successful RL in diffusion-based language models.

\begin{figure*}[t]
  \centering
  \captionsetup[subfigure]{skip=1pt}

  \begin{tabular}{@{}cc@{}}
    \begin{subfigure}{0.45\textwidth}
      \centering
      \includegraphics[width=\linewidth]{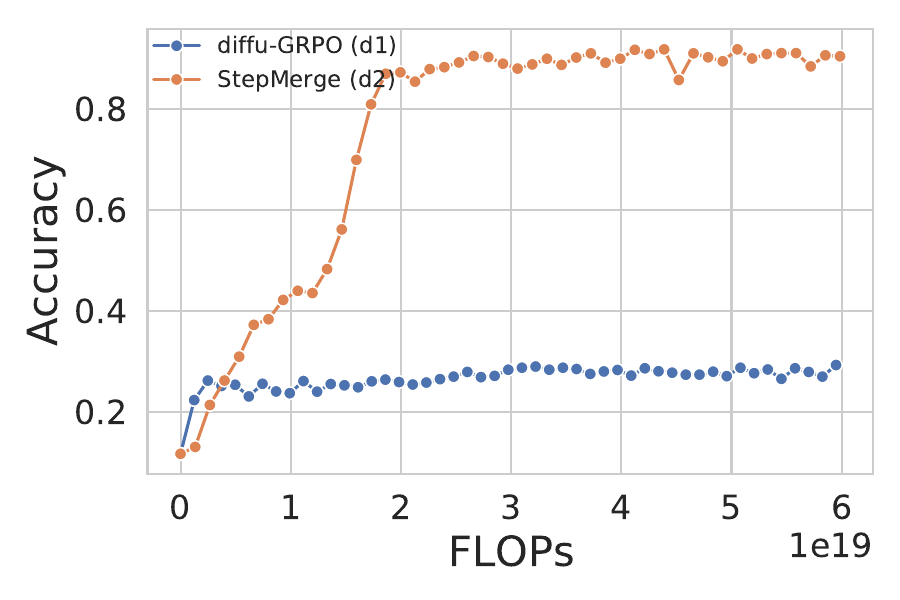}
      \caption{Sudoku}
      \label{fig:sub1}
    \end{subfigure} &
    \begin{subfigure}{0.45\textwidth}
      \centering
      \includegraphics[width=\linewidth]{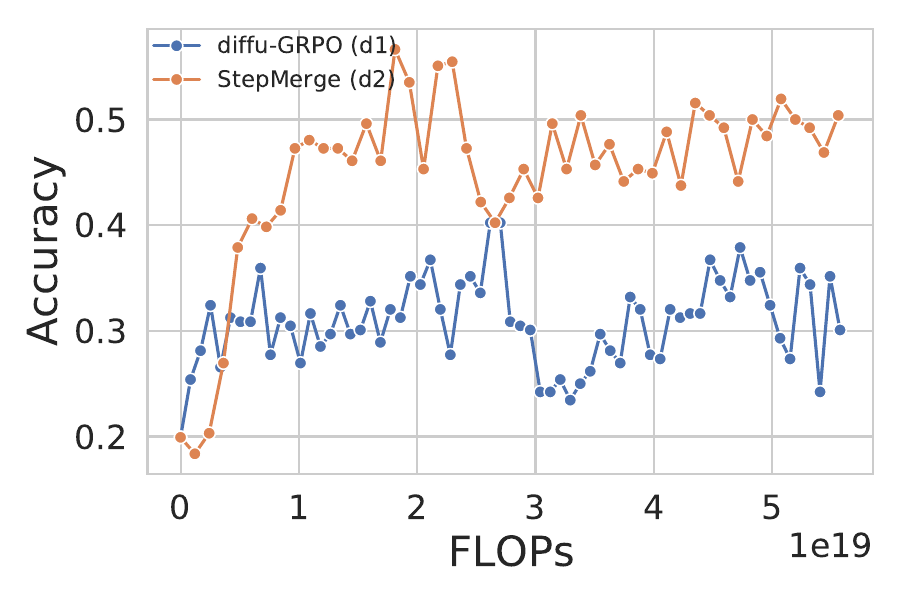}
      \caption{Countdown}
      \label{fig:sub2}
    \end{subfigure} \\[-2pt]
    \begin{subfigure}{0.45\textwidth}
      \centering
      \includegraphics[width=\linewidth]{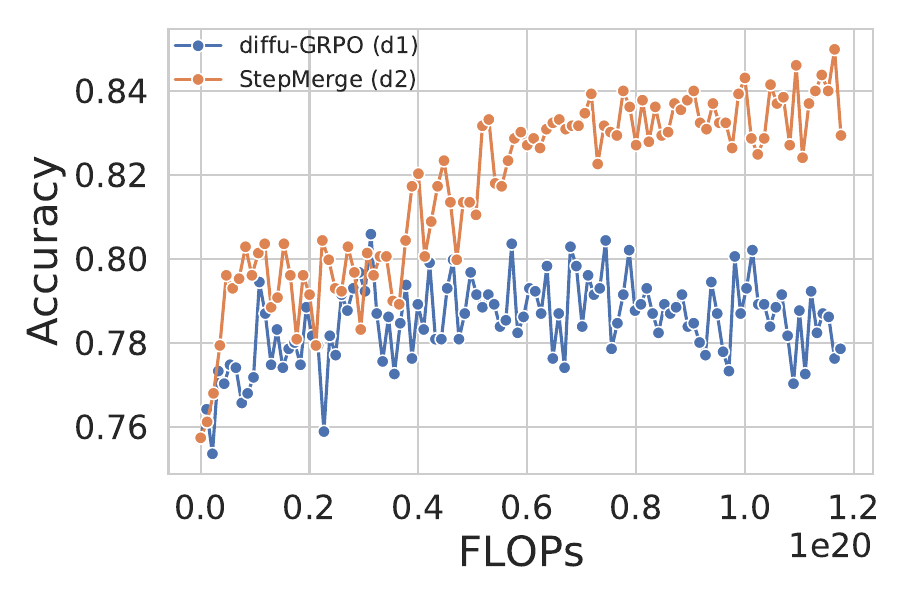}
      \caption{GSM8K}
      \label{fig:sub3}
    \end{subfigure} &
    \begin{subfigure}{0.45\textwidth}
      \centering
      \includegraphics[width=\linewidth]{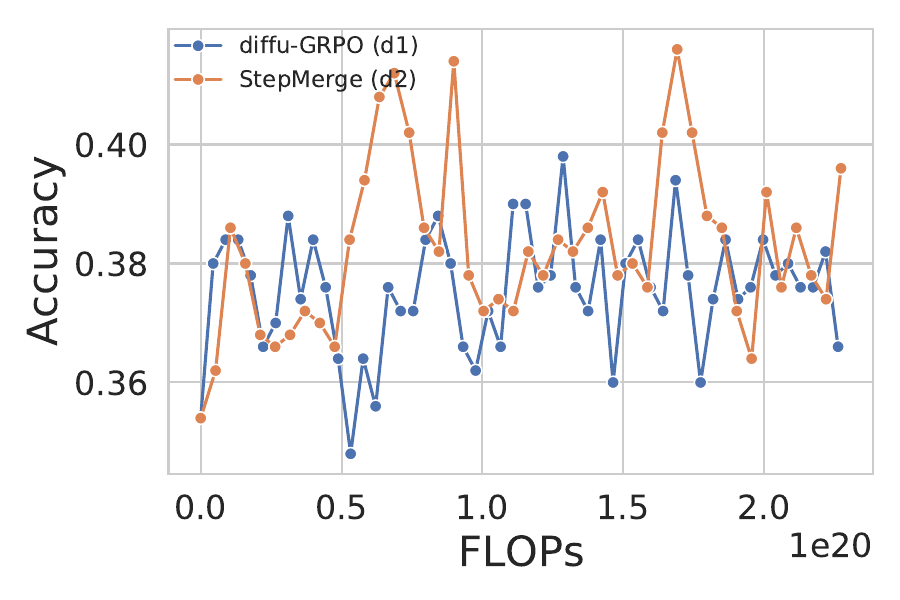}
      \caption{MATH500}
      \label{fig:sub4}
    \end{subfigure}
  \end{tabular}

  \captionsetup{skip=3pt}
  \caption{Performance-compute dynamics of \dtwostepmerge on four reasoning benchmarks.}
  \label{fig:main_result_d2_traj}
\vspace{-10pt}
\end{figure*}

\subsection{\dtwostepmerge}\label{sec:4.1_d2_stepmerge}

\textbf{Experimental Setup.}~To test \dtwostepmerge's effect on improving DLM's reasoning capacity, we follow~\citet{zhao2025d1} in using LLaDA-8B-Instruct~\citep{nie2025large} as the base model and apply different RL post-training algorithms on top of it for four reasoning tasks, including two mathematical reasoning benchmarks (GSM8K~\citep{cobbe2021gsm8k} and MATH500~\citep{lightman2023let}), and two logical reasoning benchmarks (Countdown and Sudoku).
For both training and evaluation, two tokens are generated at each time step. We use a group size of 6, and each batch contains 16 questions. Consistent with the protocol in Section~\ref{subsec:exp_d2_anyorder}, we maintain a constant compute budget (measured in FLOPs) and evaluate the model's test set performance at fixed FLOP intervals. See more details in Appendix \ref{app:subsec_d2_stepmerge_details}.

\textbf{Baselines.}~To benchmark our proposed framework, we compare \dtwo against a diverse set of baselines. Specifically, we include the original LLaDA, LLaDA 1.5~\citep{zhu2025llada}, a low-variance direct preference optimization~\citep{rafailov2023direct} post-training algorithm, d1~\citep{zhao2025d1}, a hybrid method that combines supervised finetuning on the s1k~\citep{muennighoff2025s1} long chain-of-thought data with diffu-GRPO, as well as wd1~\citep{tang2025wd1}, which reformulates policy optimization as a weighted likelihood objective to eliminate the dependence on policy ratios.

\textbf{Results.}~Shown in Figure \ref{fig:main_result_d2_traj}, \dtwostepmerge consistently outperforms diffu-GRPO in all four benchmarks. In Sudoku, Countdown, and GSM8K, \dtwostepmerge significantly dominates diffu-GRPO, and in MATH500, \dtwostepmerge demonstrates a better trend. These results indicate that \dtwo achieves a superior trade-off between efficiency and performance. Moreover, Table \ref{tab:d2_traj_main_result} shows that even without supervised finetuning on extra chain-of-thought data, \dtwo can outperform existing diffusion reasoning frameworks, demonstrating the efficacy of our proposed likelihood estimator. 

\begin{table}[h]
\small
\centering
\caption{Benchmark performance of different reasoning frameworks. $\dagger$ indicates results evaluated on the released checkpoint. $\ddagger$ indicates results taken from the corresponding paper. Baselines that include post-training are shaded. Best results are \textbf{bolded}.} 
\label{tab:d2_traj_main_result}
\begin{tabular}{lcccc}
\toprule
Method & Sudoku & Countdown & GSM8K & MATH500 \\
\midrule
LLaDA$^{\dagger}$ & 11.8\% & 19.9\% & 75.7\% & 35.4\% \\
\shadecolumnlight LLaDA 1.5$^{\dagger}$ & \shadecolumnlight 12.5\% & \shadecolumnlight 23.4\% & \shadecolumnlight 78.6\% & \shadecolumnlight 36.8\% \\
\shadecolumnlight d1$^{\ddagger}$ & \shadecolumnlight 22.1\% & \shadecolumnlight 42.2\% & \shadecolumnlight 82.1\% & \shadecolumnlight 40.2\% \\
\shadecolumnlight wd1$^{\ddagger}$ & \shadecolumnlight 25.2\% & \shadecolumnlight 51.2\% & \shadecolumnlight 82.3\% & \shadecolumnlight 39.0\% \\
\midrule
 \dtwo(ours) & \textbf{91.9\%} & \textbf{56.6\%} & \textbf{85.0\%} & \textbf{41.6\%} \\
\bottomrule
\end{tabular}
\vspace{-5pt}
\end{table}

\textbf{Ablation: Varying N in \dtwostepmerge.}~In Figure~\ref{fig:sudoku_N}, we show the Accuracy-FLOP trade-off of \dtwostepmerge with different values of $N$ in the Sudoku benchmark. With a small $N$, the model does not reliably converge to competitive performance because the corresponding likelihood estimation is highly inaccurate. In contrast, excessively large $N$ leads to over-allocation of compute to likelihood estimation, resulting in slower convergence. Our results identify $N=16$ as a favorable balance: it achieves performance comparable to $N=32$ and $N=64$, while requiring substantially fewer FLOPs.

\begin{figure}[h]
  \centering
  \includegraphics[width=0.9\linewidth]{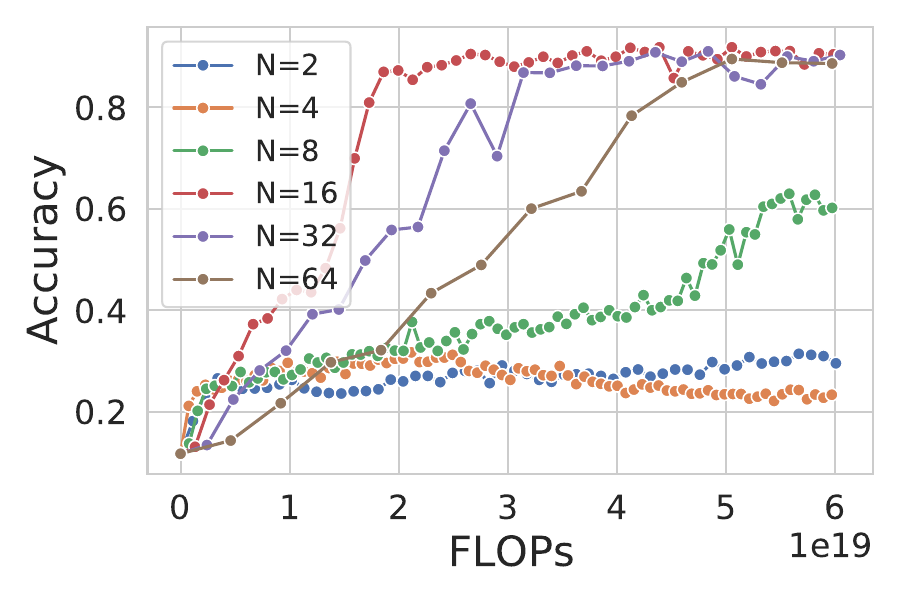}
  \captionsetup{skip=2pt}
  \caption{\dtwostepmerge performance in Sudoku with different $N$.}
  \label{fig:sudoku_N}
  \vspace{-15pt}
\end{figure}

\textbf{Generalization to Other Architectures.}~To further demonstrate \dtwostepmerge's generalization across different masked DLMs, we additionally evaluate on Dream~7B~\citep{dream2025}, a different masked DLM finetuned from Qwen2.5-7B with fully bidirectional attention. All results are evaluated 0-shot for consistency with our other experiments (Dream's original paper uses 8-shot evaluation). As shown in Table~\ref{tab:dream_7b}, \dtwostepmerge improves over diffu-GRPO by $+12\%$ on Sudoku and $+7\%$ on GSM8K, mirroring the pattern observed on LLaDA.

\begin{table}[h]
\centering
\caption{Generalization to Dream~7B (0-shot evaluation). Best results are \textbf{bolded}.}
\label{tab:dream_7b}
\begin{small}
\begin{tabular}{lcc}
\toprule
Method & Sudoku & GSM8K \\
\midrule
Dream 7B & 0.14 & 0.71 \\
+ diffu-GRPO & 0.79 & 0.73 \\
+ \dtwostepmerge & \textbf{0.91} & \textbf{0.80} \\
\bottomrule
\end{tabular}
\end{small}
\end{table}

\textbf{Coding Benchmarks.}~To further demonstrate \dtwostepmerge's generalization across different tasks, we evaluate on HumanEval~\citep{chen2021humaneval} and MBPP~\citep{austin2021mbpp}. Following~\citet{zhao2025d1}'s setup---using the same reward function, training set, and hyperparameters---we only replace the training loss with \dtwostepmerge ($N=16$). Both diffu-GRPO and \dtwostepmerge are trained at sequence length $L=256$ and evaluated at $L=256$ (training length) and $L=512$ (length generalization). We report the best-checkpoint performance under matched FLOPs.

\begin{table}[h]
\centering
\caption{Coding accuracy with LLaDA-8B-Instruct as the base model. Best results are \textbf{bolded}.}
\label{tab:coding_benchmarks}
\setlength{\tabcolsep}{4pt}
\begin{small}
\begin{tabular}{lcccc}
\toprule
Method & \multicolumn{2}{c}{HumanEval} & \multicolumn{2}{c}{MBPP} \\
\cmidrule(lr){2-3} \cmidrule(lr){4-5}
& $L{=}256$ & $L{=}512$ & $L{=}256$ & $L{=}512$ \\
\midrule
LLaDA & 35.3 & 37.8 & 41.2 & 40.4 \\
+ diffu-GRPO & \textbf{39.0} & 34.8 & 45.5 & 41.6 \\
+ \dtwostepmerge & \textbf{39.0} & \textbf{40.9} & \textbf{45.9} & \textbf{45.5} \\
\bottomrule
\end{tabular}
\end{small}
\end{table}

At the training length ($L=256$), \dtwostepmerge matches or modestly improves over diffu-GRPO. The gap widens at $L=512$: diffu-GRPO regresses on both benchmarks, while \dtwostepmerge maintains or improves. This suggests that more accurate likelihood estimation improves the policy's ability to generalize to longer outputs.

\section{Related Work, Discussion, and Conclusion}

\textbf{RL for DLMs.}~Diffusion language models \citep{sahoo2024simple, sahoo2025diffusion, nie2025large, dream2025} have recently emerged as a compelling alternative to AR, sparking research into reinforcement learning tailored for diffusion-based generation. A primary challenge in this domain is to accurately estimate the token likelihood for policy gradient signals. diffu-GRPO \citep{zhao2025d1} approximates this likelihood using logits derived from an all-mask token input. While computationally efficient, this approach introduces significant bias, leading to suboptimal policy updates. To refine this, DiffuCoder’s coupled-GRPO \citep{gong2025diffucoder} utilizes the MDLM Evidence Lower Bound (ELBO) \citep{sahoo2024simple} as a proxy for the trajectory likelihood. However, as the ELBO is only a variational bound, it introduces approximation errors that can destabilize RL training. Similarly, wd1 \citep{tang2025wd1} reformulates policy optimization as a weighted likelihood objective but inherits the biased estimation method of diffu-GRPO. While LLaDOU \citep{huang2025reinforcing} achieves a more precise decomposition of likelihood across diffusion timesteps, its full decomposition approach is computationally prohibitive for large-scale applications. Most recently, TraceRL \citep{wang2025revolutionizing} proposes merging timesteps for trajectory evaluation; however, unlike our proposed \dtwostepmerge, their heuristic approach lacks the rigorous theoretical justification required for consistent likelihood estimation. Concurrent with our work, AGRPO \citep{zhan2025agrpo} also uses multiple model forward passes to estimate trajectory likelihood, but does so via random Monte Carlo sampling; \dtwostepmerge instead employs a deterministic estimator with an analytically bounded bias (Theorem~\ref{thm:main_bound}).


\textbf{Limitations and Future Work.}~While \dtwoanyorder provides exact one-pass likelihood estimation, it requires the underlying model to support any-order decoding, which most current masked DLMs do not satisfy by default. Recent any-order dLLMs~\citep{sahoo2025esoteric, arriola2026set} natively support any-order decoding and attain strictly better speed-quality Pareto frontiers than standard masked DLMs. As such models mature, \dtwoanyorder's applicability will expand correspondingly, and we view pursuing a stronger general-purpose any-order dLLM as a natural next step.

\textbf{Conclusion.}~This paper presents \dtwo, a principled RL framework for diffusion language models. We derive a GRPO-style policy gradient formulation for masked DLMs that depends on accurate trajectory likelihood estimates. Because evaluating these likelihoods is computationally expensive, we develop a family of estimators tailored to distinct model classes. For DLMs that support a sampling algorithm called any-order decoding (AO-dLLMs), we propose \dtwoanyorder, which provides unbiased trajectory likelihood estimates in a single model pass. For standard masked diffusion models, we propose \dtwostepmerge, which approximates the trajectory likelihood with multiple forward passes and an analytically bounded error. Empirically, \dtwo achieves superior performance compared to widely used RL baselines on DLMs that support any-order decoding, and demonstrates state-of-the-art performance on four math and logical reasoning benchmarks, without relying on supervised chain-of-thought finetuning.



\section*{Acknowledgements}

We gratefully acknowledge \textbf{the Blaise team} (\href{https://blaise.tech/}{https://blaise.tech/}) for their generous provision of high-performance compute resources, which were instrumental in training our diffusion language models. In addition, we thank the Blaise team for valuable technical discussions and hands-on support with both the conceptual development and practical implementation of key components of this work.

This work was partially funded by the National Science Foundation under awards DGE-1922551 and CAREER awards 2046760 and 2145577, and by the National Institute of Health under award MIRA R35GM151243. This research was also supported in part through the use of computational resources provided by Lambda (lambda.ai) in partnership with Open Athena AI Foundation, Inc. We gratefully acknowledge their generous GPU infrastructure grants that helped make this work possible.



\section*{Impact Statement}

This work introduces a principled reinforcement learning framework for diffusion language models, improving the reliability of reasoning-oriented post-training and broadening the practical utility of dLLMs.

We acknowledge dual-use considerations specific to this work. Our toxicity steering experiments deliberately illustrate that the same likelihood-estimation machinery used to improve reasoning RL can also be used to steer models toward producing toxic outputs. As with other advances in reasoning-oriented post-training, our methods could be applied with adversarial reward signals to amplify harmful generation. Practitioners building on this work should pair it with appropriate alignment, safety evaluation, and access controls before deployment.

\bibliography{main}
\bibliographystyle{icml2026}

\newpage
\appendix
\onecolumn

\tableofcontents
\vspace{1em}

\section{Theoretical Results}\label{app:theoretical}

\subsection{Proof of Theorem~\ref{thm:rl_objective}}

\begin{proof}
We begin by recalling the definition of the RL objective:
\begin{equation}
    \mathcal{J}(\theta) 
    = \mathbb{E}_{\mathbf{x}_0^{1:L} \sim \pi_\theta(\mathbf{x}_0^{1:L} \mid \mathbf{q})} \big[ r(\mathbf{x}_0^{1:L}, \mathbf{q}) \big].
\end{equation}
Taking the policy gradient and expanding the expectation over the trajectory distribution, we have
\begin{align}
    \nabla_\theta\mathcal{J}(\theta) 
    &= \nabla_\theta \sum_{\mathbf{x}_0^{1:L}} \pi_\theta(\mathbf{x}_0^{1:L} \mid \mathbf{q}) \, r(\mathbf{x}_0^{1:L}, \mathbf{q}) \notag \\
    &= \nabla_\theta \sum_{\mathbf{x}_0^{1:L}} \sum_{\mathbf{x}_{1:T}^{1:L}} \pi_\theta(\mathbf{x}_0^{1:L} \mid \mathbf{x}_{1:T}^{1:L}, \mathbf{q}) \, \pi_\theta(\mathbf{x}_{1:T}^{1:L} \mid \mathbf{q}) \, r(\mathbf{x}_0^{1:L}, \mathbf{q}) \notag \\
    &= \nabla_\theta \sum_{\mathbf{x}_{0:T}^{1:L}} \pi_\theta(\mathbf{x}_{0:T}^{1:L} \mid \mathbf{q}) \, r(\mathbf{x}_0^{1:L}, \mathbf{q}) \notag \\
    &= \nabla_\theta \, \mathbb{E}_{\mathbf{x}_{0:T}^{1:L} \sim \pi_\theta(\mathbf{x}_0^{1:L} \mid \mathbf{q})} \big[ r(\mathbf{x}_0^{1:L}, \mathbf{q}) \big].
\end{align}

Introducing importance sampling with respect to the stale policy $\pi_{\mathrm{old}}$, we obtain
\begin{equation}
    \nabla_\theta\mathcal{J}(\theta) 
    = \nabla_\theta \, \mathbb{E}_{\mathbf{x}_{0:T}^{1:L} \sim \pi_{\mathrm{old}}(\mathbf{x}_{0:T}^{1:L} \mid \mathbf{q})}
    \Bigg[ r(\mathbf{x}_0^{1:L}, \mathbf{q}) \, 
    \frac{\pi_\theta(\mathbf{x}_{0:T}^{1:L} \mid \mathbf{q})}{\pi_{\mathrm{old}}(\mathbf{x}_{0:T}^{1:L} \mid \mathbf{q})} \Bigg].
\end{equation}

By exploiting the Markov factorization of the backward process, the joint distribution decomposes as
\begin{align}
    \pi_\theta(\mathbf{x}_{0:T}^{1:L} \mid \mathbf{q}) 
    &= \pi(\mathbf{x}_T^{1:L}) \prod_{t=0}^{T-1} \pi_\theta(\mathbf{x}_t^{1:L} \mid \mathbf{x}_{t+1}^{1:L}, \mathbf{q}).
\end{align}
Substituting this factorization, we obtain
\begin{align}
    \nabla_\theta\mathcal{J}(\theta) 
    &= \mathbb{E}_{\mathbf{x}_{0:T}^{1:L} \sim \pi_{\mathrm{old}}(\mathbf{x}_{0:T}^{1:L} \mid \mathbf{q})} 
    \Bigg[r(\mathbf{x}_0^{1:L}, \mathbf{q}) 
    \, \nabla_\theta \, \frac{\prod_{t=0}^{T-1} \pi_\theta(\mathbf{x}_t^{1:L} \mid \mathbf{x}_{t+1}^{1:L}, \mathbf{q})}{\prod_{t=0}^{T-1} \pi_{\mathrm{old}}(\mathbf{x}_t^{1:L} \mid \mathbf{x}_{t+1}^{1:L}, \mathbf{q})} \Bigg].
\end{align}

In MDLM, we assume independence across token positions, the expression further decomposes as
\begin{equation}
    \nabla_\theta\mathcal{J}(\theta) 
    = \mathbb{E}_{\mathbf{x}_{0:T}^{1:L} \sim \pi_{\mathrm{old}}(\mathbf{x}_{0:T}^{1:L} \mid \mathbf{q})} 
    \Bigg[r(\mathbf{x}_0^{1:L}, \mathbf{q}) 
    \, \nabla_\theta \prod_{t=0}^{T-1} \prod_{l=1}^L 
    \frac{\pi_\theta(\mathbf{x}_t^l \mid \mathbf{x}_{t+1}^{1:L}, \mathbf{q})}{\pi_{\mathrm{old}}(\mathbf{x}_t^l \mid \mathbf{x}_{t+1}^{1:L}, \mathbf{q})} \Bigg].
\end{equation}

Expanding the gradient of the product yields
\begin{align}
    \nabla_\theta\mathcal{J}(\theta) 
    &= \mathbb{E}_{\mathbf{x}_{0:T}^{1:L} \sim \pi_{\mathrm{old}}(\mathbf{x}_{0:T}^{1:L} \mid \mathbf{q})} 
    \Bigg[r(\mathbf{x}_0^{1:L}, \mathbf{q}) 
    \sum_{t=0}^{T-1}\sum_{l=1}^L 
    \nabla_\theta \frac{\pi_\theta(\mathbf{x}_t^l \mid \mathbf{x}_{t+1}^{1:L}, \mathbf{q})}{\pi_{\mathrm{old}}(\mathbf{x}_t^l \mid \mathbf{x}_{t+1}^{1:L}, \mathbf{q})}
    \prod_{\substack{t' \neq t \\ l' \neq l}}
    \frac{\pi_\theta(\mathbf{x}_{t'}^{l'} \mid \mathbf{x}_{t'+1}^{1:L}, \mathbf{q})}{\pi_{\mathrm{old}}(\mathbf{x}_{t'}^{l'} \mid \mathbf{x}_{t'+1}^{1:L}, \mathbf{q})} \Bigg].
\end{align}

When evaluated at $\theta = \theta_{\mathrm{old}}$, the product term reduces to $1$, yielding
\begin{equation}
    \nabla_\theta\mathcal{J}(\theta)\Big|_{\theta=\theta_{\mathrm{old}}}
    = \mathbb{E}_{\mathbf{x}_{0:T}^{1:L} \sim \pi_{\mathrm{old}}(\mathbf{x}_{0:T}^{1:L} \mid \mathbf{q})}
    \Bigg[r(\mathbf{x}_0^{1:L}, \mathbf{q}) 
    \, \nabla_\theta \sum_{t=0}^{T-1}\sum_{l=1}^L 
    \frac{\pi_\theta(\mathbf{x}_t^l \mid \mathbf{x}_{t+1}^{1:L}, \mathbf{q})}{\pi_{\mathrm{old}}(\mathbf{x}_t^l \mid \mathbf{x}_{t+1}^{1:L}, \mathbf{q})} \Bigg].
\end{equation}

Finally, since the model $\pi_\theta$ is only invoked when a masked token $\mathbf{m}$ becomes unmasked, the objective simplifies to
\begin{equation}
    \nabla_\theta\mathcal{J}(\theta)\Big|_{\theta=\theta_{\mathrm{old}}}
    = \nabla_\theta \, \mathbb{E}_{\mathbf{x}_{0:T}^{1:L} \sim \pi_{\mathrm{old}}(\mathbf{x}_{0:T}^{1:L} \mid \mathbf{q})}
    \Bigg[r(\mathbf{x}_0^{1:L}, \mathbf{q}) 
    \sum_{t=0}^{T-1}\sum_{l=1}^L 
    \mathbf{1}_{\{\mathbf{x}_{t+1}^l=\mathbf{m}, \, \mathbf{x}_t^l \neq \mathbf{m}\}} \,
    \frac{\pi_\theta(\mathbf{x}_t^l \mid \mathbf{x}_{t+1}^{1:L}, \mathbf{q})}{\pi_{\mathrm{old}}(\mathbf{x}_t^l \mid \mathbf{x}_{t+1}^{1:L}, \mathbf{q})} \Bigg].
\end{equation}
\end{proof}

\subsection{Proof of Theorem~\ref{thm:main_bound}}\label{app:proof_bound}
Our proof proceeds in three key steps:
\begin{enumerate}
    \item \textbf{Decompose the diffusion process}: We factor each diffusion step into timing (which tokens unmask) and value (what values they take) components, exploiting the conditional independence structure.
    
    \item \textbf{Bound consecutive timesteps}: For adjacent timesteps, we prove the timing component contributes at most $2k \cdot \log 2$ bits (where $k$ tokens unmask), while the value component vanishes under mild assumptions.
    
    \item \textbf{Extend to full trajectory}: We aggregate bounds over $N$ blocks, showing each block contributes at most $k_n \log B$ bits, yielding the final $O(U \log(T/N))$ bound.
\end{enumerate}

To keep notation concise, we define a sequence \emph{without} a superscript as $x_t=x_t^{1:L}=[x_t^1, \ldots x_t^L]$ and drop the prompt $q$ that we condition on $\pi_\theta(\cdot \mid \cdot, q)$.

\subsubsection{Timing and Value Factorization}

The reverse process, $\pi_\theta(x_t \mid x_{t+1})$, can be decomposed into two conceptual and computational steps: a \emph{timing} decision of whether to unmask a token, followed by a \emph{value} assignment of what it becomes. This allows us to factor the distribution into two simpler components.

\begin{definition}[Timing and Value Decomposition]
We introduce an indicator variable $S_t^l = \mathbf{1}[x_{t+1}^l = \mathbf{m} \land x_t^l \neq \mathbf{m}]$ for the unmasking event, and a categorical random variable $V_t^l = x_t^l$ for the token's value. They are aggregated as $S_t = [S_t^1 \ldots S_t^L]$ and $V_t = [V_t^1 \ldots V_t^L]$ and factorize the reverse process:
\begin{equation} \label{eq:timing_and_value}
    \pi_\theta(x_t \mid x_{t+1}) = \underbrace{\tau (S_t \mid x_{t+1})}_{\text{Timing}} \cdot \underbrace{ \nu_\theta(V_t \mid S_t^l, x_{t+1}) }_{\text{Value}}.
\end{equation}
The reverse processes decomposes over tokens as $\pi_\theta(x_t \mid x_{t+1}) = \prod_{l=1}^L \pi_\theta(x_t^l \mid x_{t+1}^{1:L})$ which emits a similar per-token factorization $\pi_\theta(x_t^l \mid x_{t+1}^{1:L}) =\tau (S_t^l \mid x_{t+1}^l) \cdot \nu_\theta(V_t^l \mid S_t^l, x_{t+1}^{1:L})$. We now define $\tau$ and $\nu_\theta$ as follows.

\paragraph{Timing Distribution ($\tau$):}
This distribution models the probability of the unmasking event $S_t^l$ for any unmasking schedule (e.g. greedy, ancestral, top-k, etc.).

\begin{itemize}
    \item If a token at $t+1$ is already unmasked ($x_{t+1}^l \neq \mathbf{m}$), it cannot unmask again; thus, the event $S_t^l=1$ has zero probability.
    \item If a token is masked ($x_{t+1}^l = \mathbf{m}$), it unmasks with probability $\alpha_t$ determined by the unmasking schedule.
\end{itemize}
\begin{equation}
    \tau(S_t^l = s \mid x_{t+1}^l) = \begin{cases}
        \alpha_t & \text{if } s=1 \text{ and } x_{t+1}^l = \mathbf{m} \\
        1-\alpha_t & \text{if } s=0 \text{ and } x_{t+1}^l = \mathbf{m} \\
        0 & \text{if } s=1 \text{ and } x_{t+1}^l \neq \mathbf{m} \\
        1 & \text{if } s=0 \text{ and } x_{t+1}^l \neq \mathbf{m}
    \end{cases}
\end{equation}

\paragraph{Value Distribution ($\nu_\theta$):}
This distribution assigns a value $V_t^l$ to the token, conditional on the timing decision $S_t^l$ and the full sequence context $x_{t+1}$.
\begin{itemize}
    \item If the decision was to unmask ($S_t^l=1$), $\nu_\theta$ is the categorical distribution over the vocabulary $\mathcal{V}$ given by the softmax output of the neural network $f_\theta$.
    \item If the decision was to not unmask ($S_t^l=0$), the process is deterministic: the token's state from $t+1$ is simply preserved at time $t$.
\end{itemize}
\begin{equation}\label{eq:value}
    \nu_\theta(V_t^l = v \mid S_t^l, x_{t+1}) = \begin{cases}
        \text{softmax}(f_\theta(x_{t+1})_l)_v & \text{if } S_t^l = 1 \\
        \delta_{v,x_{t+1}^l} & \text{if } S_t^l = 0
    \end{cases}
\end{equation}
where $\delta_{a,b}$ is the Kronecker delta, enforcing the deterministic state preservation when $S_t^l=0$.

\end{definition}

\subsubsection{Consecutive Timestep Bounds}

We begin by analyzing the error over two consecutive timesteps $t$ and $t+1$. This allows for the simplest possible analysis between the full trajectory and stepmerge trajectory:
\begin{align}
    p_{\text{true}}(x_t, x_{t+1} \mid x_{t+2}) &= \pi_\theta(x_t \mid x_{t+1}) \pi_\theta(x_{t+1} \mid x_{t+2}) \quad \text{(Full Trajectory)} \label{eq:consec_timesteps_true} \\    
    p_{\text{approx}}(x_t, x_{t+1} \mid x_{t+2}) &= \pi_\theta(x_t \mid x_{t+2}) \pi_\theta(x_{t+1} \mid x_{t+2}) \quad \text{(StepMerge Trajectory)} .
\end{align}
 We seek to bound the KL divergence between them: 
 \begin{align} \label{eq:kl_consec_timesteps}
    D_{\text{KL}}(p_{\text{true}} \| p_{\text{approx}}) &= \mathbb{E}_{p_{\text{true}}} \left[ \log \frac{p_{\text{true}}(x_t, x_{t+1} \mid x_{t+2})}{p_{\text{approx}}(x_t, x_{t+1} \mid x_{t+2})} \right] = \mathbb{E}_{p_{\text{true}}} \left[ \log \frac{\pi_\theta(x_t \mid x_{t+1})}{\pi_\theta(x_t \mid x_{t+2})} \right]
\end{align}


\begin{lemma}[Timing and Value Decomposition of KL Divergence]
The KL divergence between the true and approximate distributions for consecutive timesteps decomposes into a sum of timing and value components:
\begin{equation}
    D_{\text{KL}}(p_{\text{true}} \| p_{\text{approx}}) = D_{\text{KL, timing}} + D_{\text{KL, value}}
\end{equation}
where the components are defined as:
\begin{align} 
    D_{\text{KL, timing}} &= \mathbb{E}_{(S_t, x_{t+1}) \sim p_{\text{true}}}\left[\sum_{l=1}^L \log\frac{\tau(S_t^l \mid x_{t+1})}{\tau(S_t^l \mid x_{t+2})}\right] \label{eq:kl_timing} \\
    D_{\text{KL, value}} &= \mathbb{E}_{(S_t, V_t, x_{t+1}) \sim p_{\text{true}}}\left[\sum_{l=1}^L \log\frac{\nu_\theta(V_t^l \mid S_t^l, x_{t+1})}{\nu_\theta(V_t^l \mid S_t^l, x_{t+2})}\right] \label{eq:kl_value}
\end{align}
\end{lemma}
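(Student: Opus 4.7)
The plan is to start from the single-line expression for the KL obtained in Equation~\eqref{eq:kl_consec_timesteps} and repeatedly factorize the log-ratio until it splits cleanly into the timing and value pieces. Concretely, I will first invoke the independence of the reverse process across token positions, which lets me write
\[
\log\frac{\pi_\theta(x_t\mid x_{t+1})}{\pi_\theta(x_t\mid x_{t+2})}=\sum_{l=1}^{L}\log\frac{\pi_\theta(x_t^l\mid x_{t+1}^{1:L})}{\pi_\theta(x_t^l\mid x_{t+2}^{1:L})}.
\]
Then I will apply the per-token timing/value factorization
$\pi_\theta(x_t^l\mid x_{t+1}^{1:L})=\tau(S_t^l\mid x_{t+1}^l)\cdot\nu_\theta(V_t^l\mid S_t^l,x_{t+1}^{1:L})$
from Equation~\eqref{eq:timing_and_value} to both numerator and denominator, so that each summand decomposes into a timing log-ratio plus a value log-ratio.

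Next, I will use linearity of expectation to split the resulting double sum into two independent expectations. For the timing part, the log-ratio only depends on $(S_t^l,x_{t+1}^l,x_{t+2}^l)$; marginalizing out $V_t$ from $p_{\text{true}}$ gives the expression in Equation~\eqref{eq:kl_timing}. For the value part, I keep the full tuple $(S_t, V_t, x_{t+1})$ because $\nu_\theta$ depends on the entire context, which matches Equation~\eqref{eq:kl_value}. Identifying the sample $x_t$ with the pair $(S_t,V_t)$ (which is a bijection once $x_{t+1}$ is fixed, because $S_t$ records which positions flip from mask to non-mask and $V_t$ records the values at those positions, with the other positions trivially inherited) makes the change of variables under the expectation unambiguous.

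I expect no serious obstacle, but the one place requiring care is the bookkeeping for the two denominators $\tau(S_t^l\mid x_{t+2}^l)$ and $\nu_\theta(V_t^l\mid S_t^l,x_{t+2})$: these come from the StepMerge factor $\pi_\theta(x_t\mid x_{t+2})$, and I must verify that the same timing/value decomposition in Equation~\eqref{eq:timing_and_value} applies when the conditioning sequence is $x_{t+2}$ rather than $x_{t+1}$. This is immediate because the decomposition is structural: $\tau$ depends only on whether the conditioning token is masked (any masked token at $t+2$ is necessarily masked at $t+1$ under the masked diffusion forward process observed in reverse, but here both factors are drawn from the same model family so the decomposition applies verbatim), and $\nu_\theta$ is defined on the model output at the conditioning sequence per Equation~\eqref{eq:value}. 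After these steps, gathering the two sums yields the claimed decomposition $D_{\text{KL}}(p_{\text{true}}\|p_{\text{approx}})=D_{\text{KL, timing}}+D_{\text{KL, value}}$.
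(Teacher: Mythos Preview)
Your proposal is correct and follows essentially the same approach as the paper: start from Equation~\eqref{eq:kl_consec_timesteps}, apply the timing/value factorization of Equation~\eqref{eq:timing_and_value} to both conditioning sequences, split the log-ratio into two sums via linearity of expectation, and marginalize out $V_t$ from the timing term. The extra care you take with the $(S_t,V_t)\leftrightarrow x_t$ bijection and the applicability of the factorization when conditioning on $x_{t+2}$ is sound and slightly more explicit than the paper's own argument.
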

\begin{proof}
The proof begins with the simplified expression for the KL divergence from \autoref{eq:kl_consec_timesteps} and applies the timing-value factorization of \autoref{eq:timing_and_value}. The expectation is over the joint distribution $p_{\text{true}}(x_t, x_{t+1} \mid x_{t+2})$, where $x_t$ comprises the timing and value variables $(S_t, V_t)$.
\begin{align}
    D_{\text{KL}}(p_{\text{true}} \| p_{\text{approx}})
    &= \mathbb{E}_{(x_t, x_{t+1}) \sim p_{\text{true}}}\left[ \log\frac{\pi_\theta(x_t \mid x_{t+1})}{\pi_\theta(x_t \mid x_{t+2})} \right] \\
    &= \mathbb{E}_{(S_t, V_t, x_{t+1}) \sim p_{\text{true}}}\left[ \log\frac{\prod_l \tau(S_t^l \mid x_{t+1}) \nu_\theta(V_t^l \mid S_t^l, x_{t+1})}{\prod_l \tau(S_t^l \mid x_{t+2}) \nu_\theta(V_t^l \mid S_t^l, x_{t+2})} \right] \\
    &= \mathbb{E}_{(S_t, x_{t+1}) \sim p_{\text{true}}}\left[\sum_l \log\frac{\tau(S_t^l \mid x_{t+1})}{\tau(S_t^l \mid x_{t+2})}\right] \nonumber \\ 
    &\qquad + \mathbb{E}_{(S_t, V_t, x_{t+1}) \sim p_{\text{true}}}\left[\sum_l \log\frac{\nu_\theta(V_t^l \mid S_t^l, x_{t+1})}{\nu_\theta(V_t^l \mid S_t^l, x_{t+2})}\right]
\end{align}
Note that the timing term does not depend on the value variable $V_t$ (and only on $\tau$). Thus it can be marginalized out from the expectation over $p_{\text{true}}$.
\end{proof}

We now seek to bound the error in timing $D_{\text{KL, timing}}$ and value separately $D_{\text{KL, value}}$.

\begin{definition}[Conditional Mutual Information]
\label{def:conditional-mutual-info}
The conditional mutual information $I(A;B \mid C)$ measures the reduction in uncertainty about random variable $A$ from knowing $B$, when $C$ is also known. It can be defined equivalently in terms of conditional entropy or as a Kullback-Leibler (KL) divergence:
\begin{align}
    I(A;B \mid C) = H(A \mid C) - H(A \mid B,C) = D_{\mathrm{KL}}\big(p(a,b \mid c) \Vert p(a \mid c) p(b \mid c)\big).
\end{align}
\end{definition}

\begin{assumption}[Fixed Unmasking Schedule] \label{ass:fixed-masking}
We assume a schedule where a fixed number of $k$ tokens are unmasked at each timestep $t$.
\end{assumption}

\begin{lemma}[Timing KL Bound]\label{lem:timing_kl_bound}
Under the fixed unmasking schedule, the timing component of the KL divergence is bounded by the entropy of the timing decisions.
\begin{equation}
    D_{\text{KL, timing}} \leq 2k \cdot \log 2
\end{equation}
\end{lemma}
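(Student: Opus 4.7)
The plan is to exploit the per-token factorization of $\tau$ (from the definition preceding equation \ref{eq:value}) to localize $D_{\text{KL, timing}}$ to a small number of positions, and then to bound each surviving per-position contribution by a Bernoulli entropy term.

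First, I would push the product over tokens inside the logarithm and the expectation to write
\begin{equation*}
D_{\text{KL, timing}} = \sum_{l=1}^L \mathbb{E}_{x_{t+1}, x_{t+2} \sim p_{\text{true}}}\!\left[D_{\mathrm{KL}}\!\left(\tau(\cdot \mid x_{t+1}^l) \,\Vert\, \tau(\cdot \mid x_{t+2}^l)\right)\right].
\end{equation*}
The crucial observation is that $\tau(\cdot \mid x^l)$ from the timing definition depends on $x^l$ only through the indicator $\mathbf{1}[x^l = \mathbf{m}]$; hence whenever $x_{t+1}^l$ and $x_{t+2}^l$ share the same mask status, the two Bernoulli (or Dirac) distributions coincide and the per-position KL vanishes.

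Next, I would invoke Assumption \ref{ass:fixed-masking}: under the fixed-$k$ schedule, $x_{t+2}$ and $x_{t+1}$ differ in mask status at exactly $k$ positions, namely those unmasked during the $x_{t+2}\!\to\!x_{t+1}$ reverse step. For each such boundary position, $\tau(\cdot \mid x_{t+1}^l)$ is a Dirac on $S_t^l = 0$ (the token is already unmasked and cannot unmask again), while $\tau(\cdot \mid x_{t+2}^l) = \mathrm{Bern}(\alpha_t)$. The per-position KL therefore reduces to $-\log(1-\alpha_t)$, and one applies the entropy-style bound $D_{\mathrm{KL}}(P\Vert Q)\le H(P)+H(Q)$ with each Bernoulli entropy bounded by $\log 2$, yielding a per-position budget of $2\log 2$. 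Summing the $k$ surviving contributions gives the claim $D_{\text{KL, timing}} \leq 2k\log 2$.

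The main obstacle is controlling $-\log(1-\alpha_t)$ uniformly over the reverse trajectory, since $\alpha_t$ can approach $1$ in the final reverse steps when few masked tokens remain (where the naive bound $-\log(1-\alpha_t) \leq 2\log 2$ would fail). I expect the cleanest resolution is to exploit the fixed schedule $\alpha_t = k/M_{t+1}$: for steps where $M_{t+1}$ is comfortably larger than $k$, the bound is automatic; for the small number of terminal steps where it is not, the contribution is absorbed by grouping those steps into a single StepMerge block and bounding its contribution directly via the combinatorial entropy $\log\binom{M_{t+1}}{k}$, a slack that is consistent with the $\log(T/N+1)$ factor appearing in Theorem \ref{thm:main_bound}.
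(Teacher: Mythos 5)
There is a genuine gap, and it stems from how you read the denominator $\tau(S_t^l \mid x_{t+2})$. You treat it as the timing rule literally evaluated at $x_{t+2}^l$ (a $\mathrm{Bern}(\alpha_t)$ when $x_{t+2}^l=\mathbf{m}$), which makes the surviving per-position term equal to $D_{\mathrm{KL}}(\delta_{S_t^l=0}\,\Vert\,\mathrm{Bern}(\alpha_t)) = -\log(1-\alpha_t)$, an expression that is genuinely unbounded as $\alpha_t\to 1$. Your attempted rescue via the inequality $D_{\mathrm{KL}}(P\Vert Q)\le H(P)+H(Q)$ is false: take $P=\delta_0$ and $Q=\mathrm{Bern}(1-\varepsilon)$, where the KL is $-\log\varepsilon\to\infty$ while $H(P)+H(Q)\to 0$. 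The fallback you sketch (absorbing terminal steps into a block and invoking $\log\binom{M_{t+1}}{k}$) is not a proof of the stated lemma; it changes what is being bounded and defers to the downstream theorem. So as written, the argument does not establish $D_{\text{KL,timing}}\le 2k\log 2$.

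The missing idea is the one the paper's proof is built on: interpret $\tau(S_t\mid x_{t+2})$ as the Bayes marginal $\sum_{x_{t+1}}\tau(S_t\mid x_{t+1},x_{t+2})\,\tau(x_{t+1}\mid x_{t+2})$. With that reading, the timing term is exactly the conditional mutual information $I(S_t;x_{t+1}\mid x_{t+2})$, which is automatically bounded by the conditional entropy $H(S_t\mid x_{t+2})\le\sum_l H(S_t^l\mid x_{t+2})$ — no unbounded log-ratios ever appear. Each $S_t^l$ is binary, so each term is at most $\log 2$, and only tokens whose unmasking time is uncertain given $x_{t+2}$ contribute; under the fixed-$k$ schedule these are the $2k$ tokens unmasked over the two steps in $[t,t+2)$ (note also that your count of $k$ contributing positions, rather than $2k$, is an artifact of the literal plug-in reading: under the marginal reading, the tokens unmasked at step $t$ itself also have uncertain timing given only $x_{t+2}$). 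Your first step — the per-token factorization and the observation that positions with matching mask status contribute nothing — is fine; the proof only goes wrong once you commit to the pointwise Dirac-vs-Bernoulli KL instead of the information-theoretic identity.
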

\begin{proof}
The proof first establishes the equivalence between the timing KL divergence (\autoref{eq:kl_timing}) and conditional mutual information, and then bounds this term using entropy.

First, we show the equivalence by applying the Markov property of the true process (\autoref{eq:consec_timesteps_true}) in the numerator:
\begin{align}
    D_{\text{KL, timing}} &= \mathbb{E}_{\tau_{\text{true}}}\left[ \log\frac{\tau(S_t \mid x_{t+1})}{\tau(S_t \mid x_{t+2})} \right] = \mathbb{E}_{\tau_{\text{true}}}\left[ \log\frac{\tau(S_t \mid x_{t+1}, x_{t+2})}{\tau(S_t \mid x_{t+2})} \right].
\end{align}
We multiply and divide by $\tau(x_{t+1} \mid x_{t+2})$ to get the conditional mutual information (\autoref{def:conditional-mutual-info})
\begin{align}
     D_{\text{KL, timing}} &=  \mathbb{E}_{\tau_{\text{true}}}\left[ \log\frac{\tau(S_t | x_{t+1}, x_{t+2})}{\tau(S_t | x_{t+2})} \cdot \frac{\tau(x_{t+1} | x_{t+2})}{\tau(x_{t+1} | x_{t+2})} \right] \\
     &= \mathbb{E}_{\tau_{\text{true}}}\left[ \log\frac{\tau(S_t, x_{t+1} |  x_{t+2})}{\tau(S_t | x_{t+2}) \tau(x_{t+1} | x_{t+2})} \right] \\
     &= I(S_t; x_{t+1} \mid x_{t+2}) .
\end{align}

We bound the conditional mutual information term by relating it to the entropy of the timing decisions, which can be decomposed on a per-token basis.
\begin{align}
    I(S_t; x_{t+1} \mid x_{t+2}) 
    &= H(S_t \mid x_{t+2}) - H(S_t \mid x_{t+1}, x_{t+2}) && \text{(by definition of mutual information)} \nonumber \\
    &\le H(S_t \mid x_{t+2}) && \text{(since entropy is non-negative)} \nonumber \\
    &= \sum_{l=1}^L H(S_t^l \mid x_{t+2}) && \text{(by cond. independence of tokens)}
\end{align}
To evaluate this sum, we partition the tokens into those that unmask in the $[t, t+2)$ interval and those that do not.
\begin{align}
    \sum_{l=1}^L H(S_t^l \mid x_{t+2}) &= \sum_{l \in \text{Unmasked}} H(S_t^l \mid x_{t+2}) + \sum_{l \notin \text{Unmasked}} H(S_t^l \mid x_{t+2}) && \text{(by partitioning the sum)} \nonumber \\
    &\le \sum_{l \in \text{Unmasked}} \log 2 + 0 && \text{(by bounding each term)} \nonumber \\
    &= 2k \cdot \log 2 && \text{(by summing over the set)}
\end{align}
The second sum vanishes as its tokens have a deterministic timing decision ($S_t^l=0$), resulting in zero entropy. The first sum is over the $2k$ tokens that unmask in the interval. For each token, $S_t^l$ is a binary random variable representing the choice of unmasking at time $t$ or $t+1$. The entropy of such a variable is maximized at $\log 2$ under maximum uncertainty (a uniform distribution over the two outcomes). Using this upper bound for each of the $2k$ tokens yields the final result.
\end{proof}

\begin{definition}[Value Prediction Sensitivity]\label{def:value-pred-sensitivity}
Let $\epsilon$ be the maximum per-token, pointwise log-ratio of value probabilities between consecutive timesteps, maximized over all possible values, tokens, and states where an unmasking occurs.
\begin{align}
    \epsilon &= \max_{v, l, x_{t+1}, x_{t+2}} \log \frac{\nu_\theta(V_t^l=v \mid S_t^l=1, x_{t+1})}{\nu_\theta(V_t^l=v \mid S_t^l=1, x_{t+2})} \\
    &= \max_{v, l, x_{t+1}, x_{t+2}} \log \frac{\text{softmax}(f_\theta(x_{t+1})_l)_v}{\text{softmax}(f_\theta(x_{t+2})_l)_v} \\
    &= \max_{v, l, x_{t+1}, x_{t+2}} \Big(  \underbrace{f_\theta(x_{t+1})_{l,v} - f_\theta(x_{t+2})_{l,v}}_{\text{Logit Difference}} - Z_{l}(x_{t+1}, x_{t+2}) \Big)
\end{align}
for the difference between log-softmax normalizers $Z_l(x_{t+1}, x_{t+2}) = \log\left(\frac{\sum_j \exp\big(f_\theta(x_{t+1})_{l,j}\big)}{\sum_j \exp\big(f_\theta(x_{t+2})_{l,j}\big)}\right)$.
\end{definition}

\begin{remark}[Interpetation of Value Prediction Sensitivity]
    The value prediction sensitivity $\epsilon$ provides a worst-case guarantee on the stability of the model's value distribution $\nu_\theta$ between consecutive timesteps. It bounds how much $\nu_\theta$ can change for any token value $v$ at a position $l$ by constraining fluctuations in the underlying neural network's raw logits---namely the difference between $f_\theta(\mathbf{x}_{t+1})_{l,v}$ and $f_\theta(\mathbf{x}_{t+2})_{l,v}$ after normalization. A small $\epsilon$ therefore signifies that the entire logit vector for position $l$ remains relatively constant when the context shifts by one step (from $x_{t+1}$ to $x_{t+2}$). This logit-level stability ensures that the model's predictive distribution is not volatile, thereby validating our StepMerge approximation, which relies on the assumption that intermediate states can be skipped without drastically altering the final trajectory likelihood.
\end{remark}

\begin{lemma}[Value KL Bound]
Under the fixed unmasking schedule, the value component of the KL divergence is bounded in terms of the value prediction sensitivity $\epsilon$.
\begin{equation}
    D_{\text{KL, value}} \leq k \cdot \epsilon
\end{equation}
\end{lemma}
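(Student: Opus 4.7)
The plan is to partition the sum over positions $l$ using the timing indicator $S_t^l$. Under Assumption~\ref{ass:fixed-masking}, exactly $k$ positions satisfy $S_t^l = 1$ (the newly unmasked positions at step $t$) while the remaining $L - k$ satisfy $S_t^l = 0$; I would show that only the former contribute to the value KL, and each contributes at most $\epsilon$ to the log-ratio.

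For positions with $S_t^l = 0$, the value channel is deterministic by the second branch of \eqref{eq:value}: under the support of $p_{\text{true}}$ we have $V_t^l = x_t^l = x_{t+1}^l$, so the numerator Kronecker delta evaluates to $1$. Interpreting the deterministic branch as a formal identity in both conditionings, i.e.\ treating the value channel as information-free at non-unmasking events, the log-ratio vanishes and these positions drop out of the sum. Equivalently, the factorization can be viewed as restricting the product over the value distribution to the set $\{l : S_t^l = 1\}$, which is the natural reading since the timing channel $\tau$ already fully accounts for the degenerate $S_t^l = 0$ outcomes.

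For positions with $S_t^l = 1$, both value distributions reduce to softmax outputs of $f_\theta$ evaluated at the two different contexts $x_{t+1}$ and $x_{t+2}$. Definition~\ref{def:value-pred-sensitivity} is precisely the pointwise upper bound
\[
\log\nu_\theta(V_t^l \mid 1, x_{t+1}) - \log\nu_\theta(V_t^l \mid 1, x_{t+2}) \;\le\; \epsilon
\]
for every value in the vocabulary and every admissible pair of contexts. Summing this inequality over the $k$ positions with $S_t^l = 1$ yields a pointwise bound of $k \cdot \epsilon$ on the integrand; since the bound holds for every realization $(S_t, V_t, x_{t+1})$ drawn from $p_{\text{true}}$, it is preserved under the outer expectation, giving $D_{\text{KL, value}} \le k \cdot \epsilon$.

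The main obstacle is the careful handling of the $S_t^l = 0$ contributions: a literal reading of \eqref{eq:value} would place a Kronecker delta at $x_{t+2}^l$ in the denominator, which can disagree with $V_t^l = x_{t+1}^l$ whenever the token was unmasked between $t+2$ and $t+1$, producing an ill-defined $1/0$ ratio. The convention described above cleanly removes this artifact, after which the bound follows immediately from Definition~\ref{def:value-pred-sensitivity} and Assumption~\ref{ass:fixed-masking}; no further technical machinery is needed.
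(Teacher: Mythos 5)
Your proposal is correct and follows essentially the same route as the paper's proof: partition the per-position sum by the timing indicator $S_t^l$, bound each of the $k$ newly unmasked terms pointwise by $\epsilon$ via the value prediction sensitivity, argue the $S_t^l=0$ terms vanish, and pass the pointwise bound through the outer expectation. Your treatment of the $S_t^l=0$ case is in fact more careful than the paper's, which writes the deterministic ratio as $\log\tfrac{1}{1}$ without acknowledging that a literal reading of the Kronecker-delta branch puts $\delta_{v,x_{t+2}^l}$ in the denominator and hence yields an ill-defined $1/0$ whenever position $l$ was unmasked between $t+2$ and $t+1$; the convention you adopt (treating the value channel as information-free at non-unmasking events) is the implicit assumption needed to make that step rigorous.
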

\begin{proof}
We start with the definition of the value KL divergence:
\begin{equation}
    D_{\text{KL, value}} = \mathbb{E}_{p_{\text{true}}}\left[\sum_{l=1}^L \log\frac{\nu_\theta(V_t^l \mid S_t^l, x_{t+1})}{\nu_\theta(V_t^l \mid S_t^l, x_{t+2})}\right]
\end{equation}
To analyze the sum inside the expectation, we partition the token indices based on the timing decision $S_t^l$, which indicates whether token $l$ is unmasked at step $t$. Let $\mathcal{U}_t = \{l \mid S_t^l=1\}$ be the set of indices for tokens that unmask, and $\mathcal{U}_t^c = \{l \mid l \notin \mathcal{U}_t \} = \{l \mid S_t^l=0\}$ be the complementary set for all other tokens. The sum can be explicitly decomposed as:
\begin{align}
    \sum_{l=1}^L \log\frac{\nu_\theta(\dots)}{\nu_\theta(\dots)} &= \underbrace{\sum_{l \in \mathcal{U}_t} \log\frac{\nu_\theta(V_t^l \mid S_t^l=1, x_{t+1})}{\nu_\theta(V_t^l \mid S_t^l=1, x_{t+2})}}_{\text{Tokens unmasked at time $t$}} + \underbrace{\sum_{l \in \mathcal{U}_t^c} \log\frac{\nu_\theta(V_t^l \mid S_t^l=0, x_{t+1})}{\nu_\theta(V_t^l \mid S_t^l=0, x_{t+2})}}_{\text{Tokens \emph{not} unmasked at time $t$}} \\
    & \leq \sum_{l \in \mathcal{U}_t} \epsilon + \sum_{l \in \mathcal{U}_t^c} \log \frac{1}{1} \\
    &= k \cdot \epsilon
\end{align}

For tokens that are unmasked at time $t$ ($l \in \mathcal{U}_t$), the value distribution $\nu_\theta$ is defined by the model's softmax output, and the log-ratio is bounded by the value prediction sensitivity ($\epsilon$). For any token that is not unmasked at time $t$ ($l \in \mathcal{U}_t^c$), the value-setting process is a deterministic identity transformation, where $V_t^l = x_{t+1}^l$ with probability $1$.

Since this upper bound holds for any trajectory, the expectation over all trajectories is also bounded by the same constant.
\end{proof}

\begin{lemma}[Consecutive Timestep Bound]
Under the fixed unmasking schedule \autoref{ass:fixed-masking}, the total KL divergence between the true and approximate distributions for consecutive timesteps is bounded by the sum of the timing and value bounds.
\begin{equation}
    D_{\text{KL}}(p_{\text{true}} \| p_{\text{approx}}) \leq 2k \cdot \log 2 + k \cdot \epsilon
\end{equation}
\end{lemma}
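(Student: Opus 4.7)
The plan is to combine the three preceding results by direct arithmetic. The Timing and Value Decomposition Lemma already gives the split
\[
D_{\text{KL}}(p_{\text{true}} \| p_{\text{approx}}) = D_{\text{KL, timing}} + D_{\text{KL, value}},
\]
so the target inequality follows at once by adding the Timing KL Bound $D_{\text{KL, timing}} \le 2k \log 2$ and the Value KL Bound $D_{\text{KL, value}} \le k \epsilon$ proven immediately above. Because all three statements are established under the same fixed unmasking schedule (Assumption~\ref{ass:fixed-masking}) and against the same joint distribution $p_{\text{true}}(x_t, x_{t+1} \mid x_{t+2})$, there is no compatibility check to perform; the lemma is essentially a corollary that packages the two component estimates into a single statement for later use.

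The one conceptual point worth flagging, mainly to forestall confusion when the bound is chained into Theorem~\ref{thm:main_bound}, is the asymmetry between the $2k$ inside the timing constant and the $k$ inside the value constant. The factor $2k$ arises because, conditional on $x_{t+2}$, the set of tokens that become unmasked over the two-step window $[t, t+2)$ is determined but the binary decision of whether each such token unmasks at $t$ or at $t+1$ is not, costing at most $\log 2$ bits per token. The factor $k$ in the value term arises because, on any realized trajectory, exactly $k$ positions are newly unmasked at step $t$, and the remaining positions contribute a zero log-ratio by the deterministic carry-forward in Equation~\eqref{eq:value}.

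The hard part is therefore not the proof of this lemma at all, but rather the two ingredient lemmas that it invokes: the timing bound required rewriting the KL as a conditional mutual information and then entropy-bounding a binary indicator, and the value bound required the worst-case logit-stability definition of $\epsilon$. Once those are in hand, the consecutive-timestep bound is a single line, and the only care needed is to ensure one cites the right $k$ in each summand when generalizing to the block-of-size-$B$ argument that feeds into the full trajectory bound.
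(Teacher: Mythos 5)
Your proof is correct and matches the paper's own argument exactly: the lemma is a one-line corollary obtained by adding the timing bound $2k\log 2$ and the value bound $k\epsilon$ to the timing--value decomposition of the KL divergence. Your additional remarks on the origin of the $2k$ versus $k$ factors are accurate but not needed for the proof itself.
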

\begin{proof}
The result follows directly by combining the bounds from the preceding lemmas.
\begin{align}
    D_{\text{KL}}(p_{\text{true}} \| p_{\text{approx}}) &= D_{\text{KL, timing}} + D_{\text{KL, value}} \leq (2k \cdot \log 2) + (k \cdot \epsilon)
\end{align}
\end{proof}


\subsubsection{Entire Trajectory Bound}

We extend the analysis from an $L$-length sequence at consecutive timesteps $x_t, x_{t+1}$ to the entire trajectory $x_0 \ldots x_T$ with $N$ total blocks. To do so, first consider block $n$ with $B = L/N$ timesteps spanning $t=[nB, nB + 1, \ldots, nB + (B - 1)]$.

Within this block, the true reverse process is a Markov chain over the full trajectory:
\begin{equation}
    p_{\text{true}}^{(n)}(x_{nB}, \ldots, x_{nB + (B-1)} \mid x_{nB + B}) = \prod_{t=nB}^{nB + (B-1)} \pi_\theta(x_t \mid x_{t+1}) \quad \text{(Full Trajectory)}
\end{equation}
The StepMerge approximation, however, assumes each state $x_t$ in the block is generated independently conditioned only on the final state $x_{nB + B}$:
\begin{equation}
    p_{\text{approx}}^{(n)}(x_{nB}, \ldots, x_{nB + (B-1)} \mid x_{nB + B}) = \prod_{t=nB}^{nB + (B-1)} \pi_\theta(x_t \mid x_{nB + B}) \quad \text{(StepMerge Trajectory)}
\end{equation}

\begin{lemma}[Block KL Bound]
Under the fixed unmasking schedule \autoref{ass:fixed-masking}, the KL divergence for block $n$ is bounded by:
\begin{equation}
    D_{\text{KL}}^{(n)}(p_{\text{true}}^{(n)} \| p_{\text{approx}}^{(n)}) \leq \frac{kB(B+1)}{2} \log 2 + Bk \cdot \epsilon_{block}
\end{equation}
where $\epsilon_{block}$ is the maximum value prediction sensitivity over the timesteps in the block.
\end{lemma}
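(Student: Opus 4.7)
The plan is to extend the consecutive-timestep analysis to an arbitrary block of $B$ timesteps by first decomposing the block KL via the chain rule of relative entropy, then applying the timing/value factorization step by step, and finally aggregating the resulting per-step bounds. Concretely, I would begin by writing
\begin{equation*}
    D_{\mathrm{KL}}^{(n)}\bigl(p_{\text{true}}^{(n)} \,\Vert\, p_{\text{approx}}^{(n)}\bigr)
    = \sum_{t=nB}^{nB+B-1} \mathbb{E}_{p_{\text{true}}^{(n)}}\!\Bigl[\log\tfrac{\pi_\theta(x_t \mid x_{t+1})}{\pi_\theta(x_t \mid x_{nB+B})}\Bigr],
\end{equation*}
which follows from the Markov form of $p_{\text{true}}^{(n)}$ and the factored form of $p_{\text{approx}}^{(n)}$. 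Each summand can then be split into a timing contribution and a value contribution exactly as in the consecutive case, using the factorization $\pi_\theta(x_t \mid \cdot) = \tau(S_t \mid \cdot)\,\nu_\theta(V_t \mid S_t,\cdot)$.

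Next I would bound the timing piece. Setting $j = nB+B-t \in \{1,\dots,B\}$, the Markov property of the true process gives $\tau(S_t \mid x_{t+1}) = \tau(S_t \mid x_{t+1}, x_{nB+B})$, so the $t$-th timing term equals the conditional mutual information $I(S_t; x_{t+1} \mid x_{nB+B})$, which is upper bounded by $H(S_t \mid x_{nB+B})$. The crucial counting step is that, conditional on $x_{nB+B}$, the positions with $S_t^l = 1$ must lie among the at most $kj$ positions that are newly unmasked during the interval $[t, nB+B)$; all other positions have $S_t^l = 0$ deterministically. Using the standard per-coordinate entropy upper bound, $H(S_t \mid x_{nB+B}) \le \sum_l H(S_t^l \mid x_{nB+B}) \le kj \log 2$. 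Summing over $j=1,\dots,B$ yields the arithmetic series $\tfrac{kB(B+1)}{2}\log 2$, which is exactly the timing term of the desired bound.

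For the value piece, I would reuse the mechanism of the Value KL Bound lemma almost verbatim: at each step $t$, the log-ratio $\log \nu_\theta(V_t^l \mid S_t^l=1, x_{t+1})/\nu_\theta(V_t^l \mid S_t^l=1, x_{nB+B})$ is bounded pointwise by the block-level logit sensitivity $\epsilon_{block}$ (which is defined as the maximum of the two-step sensitivity over all pairs of states within the block), while positions with $S_t^l=0$ contribute zero because the conditional is the deterministic identity. Since exactly $k$ tokens unmask at each of the $B$ steps, summing yields $Bk\,\epsilon_{block}$. Adding the timing and value bounds gives the stated inequality.

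The main obstacle is the counting argument in the timing step. The per-coordinate entropy inequality is tight enough only if one is careful that (i) positions already unmasked at $x_{nB+B}$ contribute zero, (ii) positions that only unmask before $t$ in reverse time also contribute zero at step $t$, and (iii) the dependence among $\{S_t^l\}_l$ induced by the fixed budget of $k$ unmaskings per step is handled by an upper bound (here, the sum of marginal entropies), rather than requiring independence. A secondary technical point is ensuring that the sensitivity constant $\epsilon_{block}$ is defined over all pairs $(x_{t+1}, x_{nB+B})$ that can arise in the block, not just between immediate neighbors; this is exactly the role of the word \emph{block} in the definition, and it lets us apply the sensitivity bound uniformly across the $B$ summands without reintroducing an $O(B)$ factor in $\epsilon$.
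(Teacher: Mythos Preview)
The overall structure—chain-rule decomposition over the $B$ timesteps, timing/value split, per-step bounds, then sum—matches the paper, and your value bound is correct and identical to the paper's. The gap is in the timing step.

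Your key claim is $H(S_t \mid x_{nB+B}) \le kj\log 2$, justified by saying that the positions with $S_t^l=1$ ``must lie among the at most $kj$ positions newly unmasked during $[t,nB+B)$''. That statement is true along any fixed realization of the trajectory, but the entropy is taken conditioning only on $x_{nB+B}$, not on the intermediate states. Given $x_{nB+B}$ alone, \emph{every} one of the $M$ masked positions at time $nB+B$ has positive probability of firing at step $t$, and $M$ can be much larger than $kj$ (in fact $M=k(nB+B)$ here). Concretely, with $k=1$, $j=1$, $M=10$ and uniform unmasking, $H(S_t\mid x_{nB+B})=\log 10\gg \log 2$, so your per-step bound fails. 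Your caveat (ii)—that positions unmasking earlier in reverse time contribute zero—holds given $x_{t+1}$, not given $x_{nB+B}$; that is exactly the conditioning slip.

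The paper avoids bounding each $H(S_t\mid x_{nB+B})$ individually. It swaps the order of summation to aggregate over positions $l$ first and, for each masked $l$, uses mutual exclusivity: the indicators $\{S_t^l\}_{t}$ jointly encode a single unmasking time $T_l\in\{nB,\dots,nB+B-1,\infty\}$, so the per-token contribution over all $t$ is controlled by $\log(B+1)$ rather than by $\log 2$ at each of $j$ steps. Summing over the $kB$ tokens that unmask in the block gives $kB\log(B+1)$ for the timing piece—note this is actually a different (and asymptotically tighter) expression than the $\tfrac{kB(B+1)}{2}\log 2$ in the lemma statement. To repair your argument you would need either this token-wise aggregation, or a direct bound on the mutual information $I(S_t;x_{t+1}\mid x_{nB+B})$ that does not pass through the crude upper bound $H(S_t\mid x_{nB+B})$.
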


\begin{proof}
The KL divergence for block $n$ is the expectation of the log-ratio of the true and approximate distributions.
\begin{align}
    D_{\text{KL}}^{(n)} &= \mathbb{E}_{p_{\text{true}}^{(n)}} \left[ \log \frac{\prod_{t=nB}^{nB + (B-1)} \pi_\theta(x_t \mid x_{t+1})}{\prod_{t=nB}^{nB + (B-1)} \pi_\theta(x_t \mid x_{nB + B})} \right] \\
    &= \mathbb{E}_{p_{\text{true}}^{(n)}} \left[ \sum_{t=nB}^{nB + (B-1)} \log \frac{\pi_\theta(x_t \mid x_{t+1})}{\pi_\theta(x_t \mid x_{nB + B})} \right] \\
    &= \underbrace{\mathbb{E}_{p_{\text{true}}^{(n)}} \left[ \sum_{t=nB}^{nB + (B-1)} \log \frac{\tau(S_t \mid x_{t+1})}{\tau(S_t \mid x_{nB +B})} \right]}_{D_{\text{KL, timing}}^{(n)}} + \underbrace{\mathbb{E}_{p_{\text{true}}^{(n)}} \left[ \sum_{t=nB}^{nB + (B-1)} \log \frac{\nu_\theta(v_t \mid S_t, x_{t+1})}{\nu_\theta(v_t \mid S_t, x_{nB + B})} \right]}_{D_{\text{KL, value}}^{(n)}}
\end{align}
Using the timing-value factorization \autoref{eq:timing_and_value} and linearity of expectation, we decompose into block-level timing and value components, $D_{\text{KL}}^{(n)} = D_{\text{KL, timing}}^{(n)} + D_{\text{KL, value}}^{(n)}$, which we bound separately.

\paragraph{Timing Bound.} The timing component is the sum of conditional mutual information terms
\begin{equation}
    D_{\text{KL, timing}}^{(n)} = \sum_{t=nB}^{nB + (B-1)} \mathbb{E}_{p_{\text{true}}^{(n)}} \left[ \log \frac{\tau(S_t \mid x_{t+1})}{\tau(S_t \mid x_{nB + B})} \right] = \sum_{t=nB}^{nB + (B-1)} I(S_t; x_{t+1} \mid x_{nB + B})
\end{equation}
following the derivation in \autoref{lem:timing_kl_bound}. We can further bound the conditional mutual information by the conditional entropy from \autoref{def:conditional-mutual-info}:
\begin{equation}
    I(S_t; x_{t+1} \mid x_{nB + B}) = H(S_t \mid x_{nB + B}) - H(S_t \mid x_{t+1}, x_{nB + B}) \leq H(S_t \mid x_{nB + B}).
\end{equation}

Due to mutual exclusivity, each token can unmask at most once. For a token $l$ masked at $x_{nB+B}$, define $T_l$ as its unmasking time (if any). The collection $\{S_t^l\}_{t=nB}^{nB+B-1}$ encodes which value $T_l$ takes, where $T_l \in \{nB, ..., nB+B-1, \infty\}$. Thus:
\begin{equation}
    \sum_{t=nB}^{nB+B-1} H(S_t^l \mid x_{nB+B}) \leq H(T_l \mid x_{nB+B}^l = \mathbf{m}) \leq \log(B+1)
\end{equation}
Since exactly $kB$ tokens unmask during the block:
\begin{equation}
    D_{\text{KL, timing}}^{(n)} \leq \sum_{l \in \mathcal{M}_{nB+B}} H(T_l \mid x_{nB+B}^l = \mathbf{m}) \leq kB \cdot \log(B+1)
\end{equation}
where $\mathcal{M}_{nB+B} = \{l : x_{nB+B}^l = \mathbf{m}\}$ is the set of masked tokens at the block end.

\paragraph{Value Bound.} The value component of the divergence is:
\begin{equation}
    D_{\text{KL, value}}^{(n)} = \sum_{t=nB}^{nB + (B-1)} \mathbb{E}_{p_{\text{true}}^{(n)}} \left[ \sum_{l=1}^L \log \frac{\nu_\theta(V_t^l \mid S_t^l, x_{t+1})}{\nu_\theta(V_t^l \mid S_t^l, x_{nB + B})} \right]
\end{equation}
Following \autoref{def:value-pred-sensitivity}, we define a block-level value prediction sensitivity $\epsilon_{block}$ as the maximum log-ratio within the block:
\begin{equation}
    \epsilon_{block} = \max_{t \in [nB, (n+1)B), v, l, x_{t+1}} \log \frac{\nu_\theta(V_t^l=v \mid S_t^l=1, x_{t+1})}{\nu_\theta(V_t^l=v \mid S_t^l=1, x_{nB + B})}
\end{equation}
At each timestep $t$, the inner sum over tokens $l$ is non-zero only for the $k$ tokens being unmasked ($S_t^l=1$) by definition of $\nu_\theta$ (\autoref{eq:value}). For each of these, the log-ratio is bounded by definition by $\epsilon_{block}$. Therefore, the entire sum inside the expectation is bounded by $k \cdot \epsilon_{block}$. Summing over the $B$ timesteps yields the total value bound:
\begin{equation}
    D_{\text{KL, value}}^{(n)} \leq \sum_{t=nB}^{nB + (B-1)} k \cdot \epsilon_{block} = Bk \cdot \epsilon_{block}
\end{equation}

\paragraph{Total Bound.} Combining the bounds for the timing and value components gives the final result for the block-level KL divergence.
\begin{equation}
    D_{\text{KL}}^{(n)} = D_{\text{KL, timing}}^{(n)} + D_{\text{KL, value}}^{(n)} \leq kB \cdot \log(B+1) + Bk \cdot \epsilon_{block}
\end{equation}
\end{proof}

Finally, we aggregate the per-block errors to establish a bound for the entire generation trajectory. Since the approximation for each block is conditionally independent of the others given the state at the end of the block, the total KL divergence is the sum of the per-block KL divergences.

\begin{theorem}[Main Bound]
Let $L$ be the total number of tokens (each unmasked exactly once), $B = T/N$ be the number of timesteps per block, and $\epsilon_\text{block}$ be the maximum value prediction sensitivity within a block. The KL divergence between the true sequential process (Full Trajectory) and the block-parallel approximation (StepMerge Trajectory) is bounded by:
\begin{equation}
    D_{\text{KL}}(p_{\text{true}} \| p_{\text{approx}}) \leq L \cdot \log(B+1) + L \cdot (B-1) \cdot \epsilon_\text{block}
\end{equation}
\end{theorem}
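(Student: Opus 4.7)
The plan is to lift the per-block bound established in the Block KL Bound lemma to the full trajectory by invoking the chain rule of KL divergence across the $N$ non-overlapping blocks, then aggregating the resulting expectations.

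First, I would factorize both distributions along the block boundaries. By the Markov property of the true reverse chain, $p_{\text{true}}$ factors as a product of block-local conditionals, each depending only on its right endpoint:
\begin{equation*}
p_{\text{true}}(x_0, \ldots, x_{T-1} \mid x_T) = \prod_{n=0}^{N-1} p_{\text{true}}^{(n)}\big(x_{nB}, \ldots, x_{nB+B-1} \mid x_{(n+1)B}\big).
\end{equation*}
The StepMerge approximation is by construction block-independent with the same boundary conditioning variables, so it admits the analogous factorization with $p_{\text{approx}}^{(n)}$ in place of $p_{\text{true}}^{(n)}$. Since the two factorizations share block endpoints, the chain rule for the KL divergence collapses cleanly to
\begin{equation*}
D_{\text{KL}}(p_{\text{true}} \| p_{\text{approx}}) = \sum_{n=0}^{N-1} \mathbb{E}_{p_{\text{true}}}\big[ D_{\text{KL}}\big(p_{\text{true}}^{(n)}(\cdot \mid x_{(n+1)B}) \,\|\, p_{\text{approx}}^{(n)}(\cdot \mid x_{(n+1)B})\big) \big].
\end{equation*}

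Next, I would invoke the Block KL Bound on each summand. That bound is pointwise in the boundary state $x_{(n+1)B}$, so taking the outer expectation preserves the inequality. Summing over $n$ and using the fixed-schedule accounting $k = L/T$ and $kB = L/N$, the timing contributions aggregate to $\sum_{n=0}^{N-1} kB \log(B{+}1) = L \log(B{+}1)$. For the value contributions, I would refine the block bound by noting that the last timestep in each block, $t = (n{+}1)B{-}1$, satisfies $x_{t+1} = x_{(n+1)B}$, so its log-ratio vanishes identically; only $B{-}1$ effective timesteps per block contribute, which, summed across all token positions (using $\epsilon_{block}$ as a uniform ceiling on per-position sensitivity), yields at most $L(B{-}1)\epsilon_{block}$. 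Adding the two pieces gives the stated bound.

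The hard part will be the bookkeeping around the value term: one must justify the uniform bound on the value log-ratios when the sensitivity $\epsilon_{block}$ is defined relative to comparisons against the \emph{block endpoint} rather than the adjacent timestep, and one must verify that the tightening to $(B{-}1)$ effective steps interacts correctly with the per-token accounting so that the per-block $k(B{-}1)\epsilon_{block}$-style contribution aggregates to $L(B{-}1)\epsilon_{block}$ globally rather than the loose $L(B{-}1)/B \cdot \epsilon_{block}$ one gets from a naive count. A secondary subtlety is checking that $p_{\text{approx}}$ indeed inherits the clean block factorization at the trajectory level, which requires unrolling the generation order and confirming that no block's approximation secretly depends on variables outside its endpoint.
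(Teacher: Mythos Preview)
Your approach is essentially the paper's: use the block-wise conditional-independence of both processes to write the total KL as a sum of per-block KLs, invoke the Block KL Bound on each, and aggregate using $\sum_n kB = L$. The paper's own derivation in fact lands on $L\log(B{+}1)+L\,\epsilon_{\text{block}}$ for the value piece (no $(B{-}1)$ factor), so the $(B{-}1)$ in the theorem statement is slack that is absorbed a fortiori.

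Your stated ``hard part'' is a non-issue, and the worry is inverted: summing $k(B{-}1)\epsilon_{\text{block}}$ over $N$ blocks gives $Nk(B{-}1)\epsilon_{\text{block}} = \tfrac{L}{B}(B{-}1)\epsilon_{\text{block}}$, which is \emph{tighter} than both $L\,\epsilon_{\text{block}}$ and $L(B{-}1)\epsilon_{\text{block}}$; you do not need to reach $L(B{-}1)\epsilon_{\text{block}}$ on the nose, since any smaller upper bound implies the stated one. Your observation that the final in-block step $t=(n{+}1)B{-}1$ contributes zero to the value KL is correct and is a genuine sharpening not present in the paper.
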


\begin{proof}
For block $n$, let $k$ be the number of tokens unmasked per timestep in that block. From the tighter timing bound using mutual exclusivity and the value bound:
\begin{equation}
    D_{\text{KL}}^{(n)} \leq k B \cdot \log(B+1) + k B \cdot \epsilon_{block}
\end{equation}

Summing over all $N$ blocks:
\begin{align}
    D_{\text{KL}} = \sum_{n=0}^{N-1} D_{\text{KL}}^{(n)} &\leq \sum_{n=0}^{N-1} \left(k B \cdot \log(B+1) + k B \cdot \epsilon_{block}\right) \\
    &= \left(\sum_{n=0}^{N-1} k B\right) \log(B+1) + \left(\sum_{n=0}^{N-1} k B\right) \epsilon_{block}
\end{align}

Since each token unmasks exactly once and there are $L$ tokens total we know $\sum_{n=0}^{N-1} k B = L$. This yields:
\begin{equation}
    D_{\text{KL}} \leq L \cdot \log(B+1) + L \cdot \epsilon_\text{block}
\end{equation}

Substituting $B = T/N$:
\begin{equation}
    D_{\text{KL}} \leq L \cdot \log\left(\frac{T}{N} + 1\right) + L \cdot \epsilon_\text{block}
\end{equation}
\end{proof}


\section{\dtwo Details}

\subsection{KL Divergence Estimation}

Following \citet{zhao2025d1}, we utilize the following estimator to estimate the KL divergence in the \dtwo loss function.

\begin{equation}
     D_\mathrm{KL}\left(\pi_\theta(\mathbf{x}^{1:L}\mid \mathbf{q}) \,\Vert\, \pi_{\text{ref}}(\mathbf{x}^{1:L} \mid \mathbf{q})\right) \approx \sum_{l=1}^L \frac{\pi_{\textnormal{ref}}(\x^l \mid \mathbf{q})}{\pi_\theta(\x^l \mid \mathbf{q})} - 1 - \log   \frac{\pi_{\textnormal{ref}}(\x^l \mid \mathbf{q})}{\pi_\theta(\x^l \mid \mathbf{q})},
\end{equation}

where $\pi(\x^l \mid \mathbf{q})$ comes from the corresponding trajectory likelihood estimator.



\subsection{Any-Order Decoding with Prompts}

In Algorithm \ref{alg:any_order_decoding_prompt}, we describe the prompted version of any-order decoding, which is practically used on the reasoning benchmark experiments.

\begin{algorithm}[ht]
    \small 
    \caption{Any-Order Decoding with Prompts}
    \label{alg:any_order_decoding_prompt}
    \begin{algorithmic}
    \STATE \textbf{Input:} DLM model $p_\theta$, sequence length $L$, prompt $\mathbf{q}^{1:L_q}$. 
    \STATE $\x^{L_q+1:L_q+L} \leftarrow \m^{L_q+1:L_q+L}$; $\sigma(L_q+1),\dots,\sigma(L_q+L) \leftarrow L_q+L+1$; $\sigma(1),\dots,\sigma(L_q) \leftarrow L_q$; $n \leftarrow L_q$ 
    \WHILE{$n < L$}
        \FOR{$l=1$ to $L$}
        \STATE $\textnormal{attn}(\x^{\sigma(l)}) \leftarrow \x_0^{\sigma(\leq l)} \; \textnormal{if} \; \x^{\sigma(l)} \neq \m \; \textnormal{else} \; \x_0^{\sigma(<l)}\cup\m^{\sigma(l)}$
        \ENDFOR
        \STATE $\x_0^{l_{1:k}} \sim p_\theta(\cdot \mid \mathbf{q}^{q:L_q} \oplus \x^{L_q+1:L_q+L}, \textnormal{attn})$
        \STATE $\x^{l_{1:k}} \leftarrow \x_0^{l_{1:k}}$; $\sigma(l_j)_{j=1}^k \leftarrow n+j$; $n \leftarrow n + k$
    \ENDWHILE
    \STATE \textbf{return} $\x^{1:L}$
    \end{algorithmic}
\end{algorithm}


\subsection{\dtwoanyorder: One-Shot Likelihood Evaluation with Prompts}
\label{app:subsec_trajectory_with_prompt}

This section details the one-shot likelihood evaluation for sequences decoded using the prompted any-order decoding algorithm (described in Algorithm \ref{alg:any_order_decoding_prompt}).

\textbf{One-shot Trajectory Likelihood.}~Let a sampled trajectory of $L$ tokens be denoted as $\mathbf{x}_{0:T}^{L_q+1:L_q+L}$. Here, the trajectory likelihood,
\begin{equation}
    \pi_\theta(\mathbf{x}_{0:T}^{L_q+1:L_q+L} \mid \mathbf{q}^{1:L_q}) = \prod_{l=L_q+1}^{L_q+L} \pi_\theta(\mathbf{x}^{\sigma(l)} \mid \mathbf{x}^{\sigma(<l)}),
\end{equation}
can be computed in a single model forward pass. To efficiently implement this, we construct an input sequence of length $L_q + 2L$ by concatenating the prompt, the clean tokens, and a set of mask tokens: $\mathbf{q}^{1:L_q} \oplus \mathbf{x}_0^{1:L} \oplus \mathbf{m}^{L+1:2L}$. We assign positional encoding indices $\mathrm{pos}_l$ to ensure that each clean token and its corresponding mask token share the same index:
\begin{equation}
    \mathrm{pos}_l = \begin{cases}
        l, &\quad l \leq L_q + L \\
        l - L, &\quad L_q + L < l \leq L_q + 2L.
    \end{cases}
\end{equation}
The attention mask is specifically crafted such that:
\begin{itemize}
    \item A clean token, either $\mathbf{q}^{\sigma(l)}$ or $\mathbf{x}_0^{\sigma(l)}$, attends to $ \mathbf{x}_0^{\sigma(\leq l)}$;
    \item A mask token $\mathbf{m}^{L+\sigma(l)}$ attends to $\mathbf{x}_0^{\sigma(<l)} \cup \mathbf{m}^{L+\sigma(l)}$.
\end{itemize}
As illustrated in Figure~\ref{fig:ao_mask}, this configuration ensures that the output logits of the mask token at position $L+l$ exactly reproduce the logits of $\mathbf{x}_0^l$ as if it were decoded during sampling.

\begin{figure}
    \centering
    \includegraphics[width=0.9\linewidth]{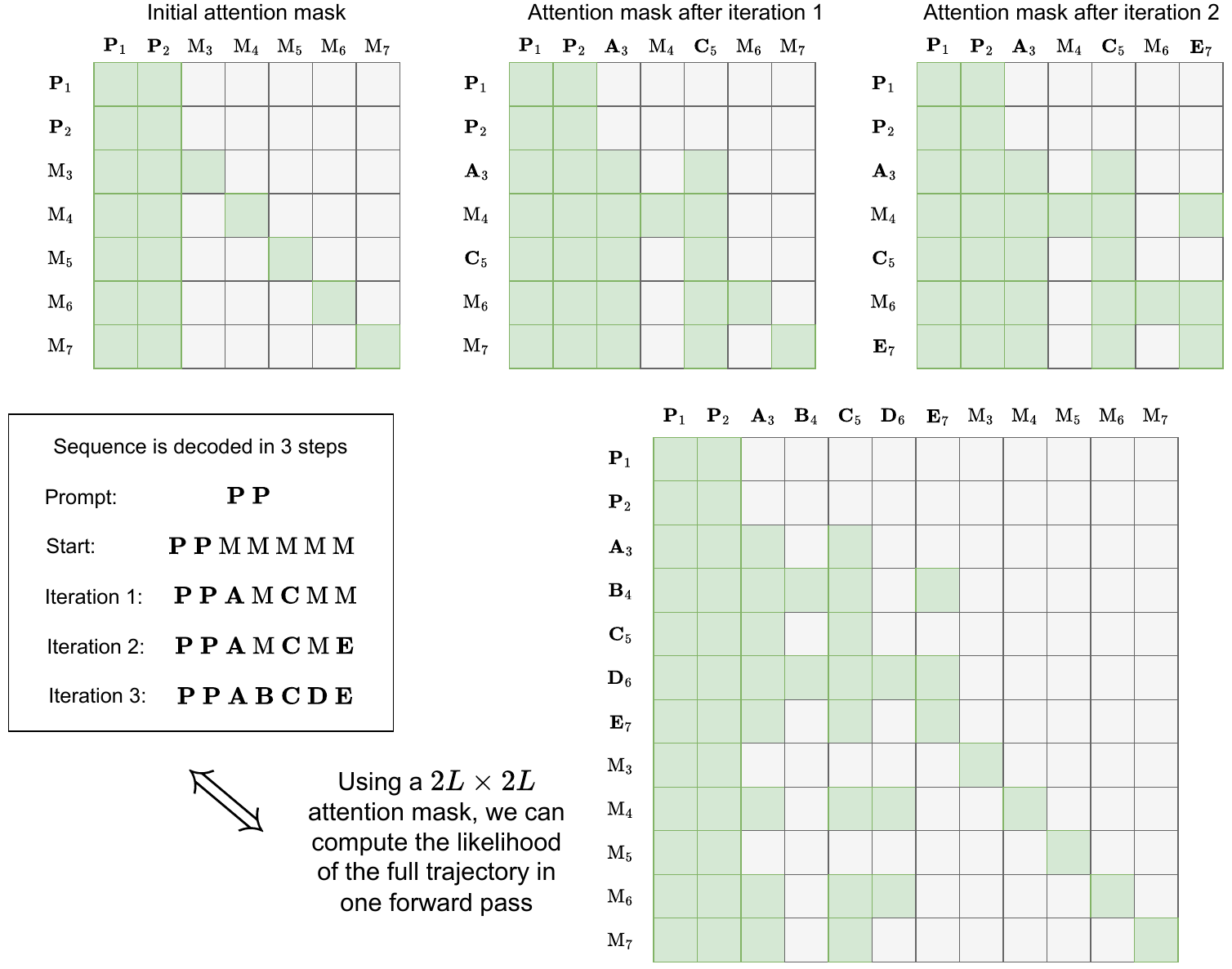}
    \caption{Illustration of the attention pattern of an any-order causal AO-ARM when decoding with prompts. Green squares represent attention is `turned on', i.e., attention bias of $0,$ between queries (rows) and keys / values (columns), gray represents no attention, i.e., attention bias of $-\infty$. Subscripts denote positional embedding ids.
    In this example, the sequence is generated as a completion to the prompt $\textbf{P}\textbf{P}$ with the following trajectory: $\mathrm{M}\mathrm{M}\mathrm{M}\mathrm{M}\mathrm{M}\rightarrow \textbf{A}\mathrm{M}\textbf{C}\mathrm{M}\mathrm{M}\rightarrow\textbf{A}\mathrm{M}\textbf{C}\mathrm{M}\textbf{E}\rightarrow$\textbf{A}\textbf{B}\textbf{C}\textbf{D}\textbf{E}.
    \textit{(Top)} Any-order causal attention mask during generation.
    \textit{(Bottom)} $(L_q + 2L) \times (L_q + 2L)$ attention mask that enables computation of likelihood for entire sequence trajectory in one forward pass.
    The model receives a $L_q + 2L$ sequence consisting of a concatenation of the $L$ generated tokens and $L$ mask tokens.
    The likelihood is computed using the output of the last $L$ output tokens.
    }
    
    \label{fig:ao_mask}
\end{figure}

\subsection{\dtwo Training Algorithms}

We present the pseudocode of \dtwostepmerge and \dtwoanyorder in Algorithm \ref{alg:d2_step} and Algorithm \ref{alg:d2_ao}.

\begin{algorithm}[ht]
    \small
    \caption{\dtwostepmerge}
    \label{alg:d2_step}
    \begin{algorithmic}
    \STATE \textbf{Input:} Reward model $r$, reference model $\pi_{\text{ref}}$, prompt distribution $\mathcal{Q}$, number of completions per prompt $G$, number of inner updates $n$, number of time segments $N$.
    \STATE Initialize $\pi_\theta \leftarrow \pi_{\text{ref}}$
    \REPEAT
        \STATE $\pi_{\text{old}} \leftarrow \pi_\theta$
        \STATE Sample prompt $\mathbf{q} \sim \mathcal{Q}$
        \STATE Sample $G$ completion trajecotries $\{\x_{0:T}^{(i)}\}_{i=1}^G \sim \pi_{\textnormal{old}}(\cdot \mid \mathbf{q})$
        \STATE Compute advantage $\{A_{(i)}\}_{i=1}^G$ (see Section \ref{subsec:rlvr})
        \FOR{$j$ = $1$ to $N$}
            \STATE \texttt{stop\_gradient}(Compute and collect $\{\pi_{\textnormal{old}}(\mathbf{x}_{j\frac{T}{N}:(j+1)\frac{T}{N}}^{(i)} \mid \mathbf{q})\}_{i=1}^G$) 
            \STATE \texttt{stop\_gradient}(Compute and collect $\{\pi_{\textnormal{ref}}(\mathbf{x}^{(i)}_{j\frac{T}{N}:(j+1)\frac{T}{N}} \mid \mathbf{q})\}_{i=1}^G$)
        \ENDFOR
        \FOR{gradient\_iterations $1$ to $n$}    
            \FOR {$j$ $1$ to $N$}
        \STATE Compute \dtwostepmerge GRPO objective (Eq. (\ref{eq:diffusion_d2_bdlm})) with respect to $\{\mathbf{x}_{j\frac{T}{N}:(j+1)\frac{T}{N}}^{(i)}\}_{i=1}^G$
            \STATE Backward pass to calculate gradient
            \ENDFOR
        \STATE Update $\theta$ with optimizer.
        \ENDFOR
    \UNTIL{converged}
    \STATE \textbf{return} $\pi_\theta$
    \end{algorithmic}
\end{algorithm}

\begin{algorithm}[ht]
    \small
    \caption{\dtwoanyorder}
    \label{alg:d2_ao}
    \begin{algorithmic}
    \STATE \textbf{Input:} Reward model $r$, reference model $\pi_{\text{ref}}$, prompt distribution $\mathcal{Q}$, number of completions per prompt $G$, number of inner updates $n$.
    \STATE Initialize $\pi_\theta \leftarrow \pi_{\text{ref}}$
    \REPEAT
        \STATE $\pi_{\text{old}} \leftarrow \pi_\theta$
        \STATE Sample prompt $\mathbf{q}^{1:L_q} \sim \mathcal{Q}$
        \STATE Sample $G$ completions $\{\x_0^{L_q+1:L_q + L}\}_{i=1}^G \sim \pi_{\textnormal{old}}(\cdot \mid \mathbf{q}^{1:L_q})$
        \STATE Compute advantage $\{A_{(i)}\}_{i=1}^G$ (see Section \ref{subsec:rlvr})
        \STATE Build input sequence $INPUT=\mathbf{q}^{1:L_q} \oplus \mathbf{x}_0^{L_q + 1:L_q + L} \oplus \mathbf{m}^{L_q + L + 1:L_q + 2L)}\}_{i=1}^G$
        \STATE Build attention mask, see Appendix \ref{app:subsec_trajectory_with_prompt}
        \STATE \texttt{stop\_gradient}(compute $\{\pi_{\textnormal{old}}^{\textnormal{AO}}(\mathbf{x}_{0}^{L_q + 1:L_q + L} \mid INPUT\})$
        \STATE \texttt{stop\_gradient}(compute $\{\pi_{\textnormal{ref}}^{\textnormal{AO}}(\mathbf{x}_{0}^{L_q + 1:L_q + L} \mid INPUT\}$)
        \FOR{gradient\_iterations $1$ to $n$}                     \STATE Compute \dtwoanyorder GRPO objective (Eq. (\ref{eq:diffusion_d2_bdlm})) 
            \STATE Update $\theta$ with optimizer.
        \ENDFOR
    \UNTIL{converged}
    \STATE \textbf{return} $\pi_\theta$
    \end{algorithmic}
\end{algorithm}

\section{Additional Experimental Details}

\subsection{\dtwoanyorder}

\subsubsection{Eso-LM}
\label{app:subsubsec:eso-lm-details}
\textbf{Hyperparameters.}~In this experiment, we reuse the `Eso-LM-B-alpha-1' checkpoint released by \citet{sahoo2025esoteric}, which is the pure diffusion version of Eso-LM without AR integration. In both training and evaluation, we let the model generate sentences with 512 tokens in free form. We set the group size as 16, and each group consists of a batch of 4 samples. When generating token sequences, we apply no annealing, i.e., the temperature value is set as 1.0.

\textbf{Baselines.}~Similar to \dtwo, DDPO also decomposes the sequence likelihood along the time steps, and the original version of DDPO \citep{black2023training} is a PPO-style algorithm. To conduct a fair comparison, we remove the value network in DDPO and use the group advantage instead. In addition, DDPO does not involve trust regions or KL divergence with the reference model, so we also remove the corresponding parts in \dtwo to conduct a fair comparison.

\subsubsection{Any-Order Causal LLaDA: Finetune}
\label{app:subsubsec_any_order_causal_llada_finetune}

Our finetuning recipe consists of two stages: data collection and finetuning. In the data collection stage, we aim to collect the sample sequences generated by LLaDA-2.0-mini~\citep{bie2025llada2} served on the dLLM inference engine dInfer~\citep{ma2025dinfer}. In the finetuning stage, we finetune the LLaDA-8B-Instruct model on the LLaDA-2.0-mini distillation data that we collected before using the token-efficient training algorithm from \citet{arriola2026set}.

\textbf{Data collection.}~First, we collect distillation data from a subset of prompts of the `neginashz/star-sft-intellect-instruct-3' dataset from Hugging Face~\citep{neginashz2024star-sft-intellect-3}. In particular, we collect 150k prompts consisting of 15k prompts from the `am\_chat', `am\_if', `openreasoning\_code', `openreasoning\_science', `openreasoning\_tool', and the `toucan\_tool' splits, and 60k prompts from the `openreasoning\_math' split. We then input these prompts into the LLaDA-2.0-mini model with a maximum generation length of 1024, block size of 32, and confidence threshold of 0.9~\citep{wu2025fast}. Second, we collect instruction tuning data from the train set of the corresponding reasoning benchmarks, i.e., GSM8K and MATH500. Similarly, we input the prompts from the train sets of these data into LLaDA-2.0-mini and collect its outputs.

\textbf{Any-order causal finetuning algorithm.}~We follow \citet{arriola2026set} to utilize their token-efficient training algorithm to do any-order causal finetuning by useing the AO-ARM objective:

\begin{equation}
    \mathcal{L}_{\textnormal{AO-ARM}} = \mathbb{E}_{\sigma \sim U(S_D)}\Bigg[\sum_{l=1}^L\log p_\theta(\x_0^{\sigma(l)}\mid \x_0^{\sigma(<l)}) \Bigg].
\end{equation}

In practice, given a clean token sequnce $\mathbf{q}^{1:L_q} \oplus \x^{L_q+1:L_q+L}$ we first randomly sample a permutation $\sigma$ of integer $1, \ldots, L$. Note that here, we slightly tweak $\sigma$ to make it respect block decoding. In other words, the permutation is only applied within each block, and among blocks we use a left-to-right order. After that, we construct a $L_q + 2L$ sequence $\mathbf{q}^{1:L_q} \oplus \x^{L_q+1:L_q+L} \oplus \m^{L_q+L:L_q+2L}$. We then assign positional encoding indices $\mathrm{pos}_l$ to ensure that each clean token and its corresponding mask token share the same index:
\begin{equation}
    \mathrm{pos}_l = \begin{cases}
        l, &\quad l \leq L_q + L \\
        l - L, &\quad L_q + L < l \leq L_q + 2L.
    \end{cases}
\end{equation}
The attention mask is specifically crafted such that:
\begin{itemize}
    \item A prompt query $\mathbf{q}^l$ attends to $\mathbf{q}^{1:L_q}$;
    \item A clean token $\mathbf{x}_0^l$ attends to $\mathbf{q}^{1:L_q} \cup \mathbf{x}_0^{\sigma(\leq l)}$;
    \item A mask query $\mathbf{m}^{L+l}$ attends to $\mathbf{q}^{1:L_q} \cup \mathbf{x}_0^{\sigma(<l)} \cup \mathbf{m}^{L+\sigma(l)}$.
\end{itemize}

We then input the toke sequence $\mathbf{q}^{1:L_q} \oplus \x^{L_q+1:L_q+L} \oplus \m^{L_q+L:L_q+2L}$ into the LLaDA model with the crafted attention mask and positional encodings. By computing loss on the logits of the last $L$ mask tokens, we are trarning the LLaDA model into an any-order causal model.

\textbf{Any-order causal finetuning setting.}~Our finetuning constists of two stages. The first stage is to finetune the LLaDA-8B-Instruct checkpoint on the Intellect-SFT distillation data that we collected. The second stage is to finetune the checkpoint after the first round of finetuning using the GSM8K / MATH500 distillation data to teach the model how to do instruction following. In both stages, we set aside a separate validation sets from the training set, and pick the checkpoint with the lowest validation loss. In terms of hyperparameters, we use a learning rate of 1e-5, a block size of 32, and a batch size of 8. Unlike \citet{zhao2025d1}, we do not apply LoRA in the finetuning stage. Instead, we freeze all other parameters and finetune the `q\_proj', `k\_proj', and `v\_proj' layers. In Appendix \ref{app:subsubsec:any_order_causal_llada_results}, we provide an ablation results demonstrating the efficacy of the first finetune stage, where the model is exposed to a more diverse and larger set of data.

\subsubsection{Any-Order Causal LLaDA: RL}
We follow the setting in \citet{zhao2025d1} by applying LoRA layers \citep{hu2022lora} to the q\_proj, k\_proj, v\_proj, o\_proj, up\_proj, down\_proj, and gate\_proj layers of LLaDA-8B-Instruct with a rank of 128, $\alpha$ of 64, and a dropout rate of 0.05. Unlike \citet{zhao2025d1}, we do not apply quantization to the model to accelerate training. We also turn off the random masking on prompt tokens for both \dtwoanyorder and diffu-GRPO. We set the learning rate as 1e-6, the number of inner iterations as 2, and group size as 8 to stablize training. $\beta$ is set as 0.04, $\epsilon$ is 0.5, and the effective batch size is 16. The temperature of on-policy sampling is set as 0.9. During evaluation, the sampling temperature is 0, i.e., we apply greedy sampling on the test set. To keep it consistent with the \dtwostepmerge experiments, at each time step, two tokens with the largest two confidence scores are decoded simultaneously. To reduce the train-test mismatch, we artificially assign a decoding order to the two tokens simultaneously decoded using a left-to-right order. In other words, when constructing the causal attention mask for the remaining sampling steps, we let the token on the right attend to the token on the left. This sampling setting is consistent across RL training and evaluation. For both training and evaluation, the block size of LLaDA~\citep{nie2025large} is set as 32. In this experiment, we apply diffu-GRPO directly on any-order causal LLaDA checkpoint without SFT. We apply the same reward functions as in \citet{zhao2025d1}. Unlike \citet{zhao2025d1}, we keep the standard deviation denominator when computing advantages. For GSM8K, we use a maximal prompt length of 200 and a sequence length of 256. All hyperparameters are shared for both the \dtwoanyorder runs and the diffu-GRPO runs reported in Figure \ref{fig:d2_anyorder_gsm8k} to ensure fair comparison. We also report a new set of experiments on MATH500 in Appendix \ref{app:subsubsec:any_order_causal_llada_results}, where we use a generation length of 512.




\subsection{\dtwostepmerge}
\label{app:subsec_d2_stepmerge_details}
\textbf{Hyperparameters.}~In this set of experiments, we follow the setting in \citet{zhao2025d1} by applying LoRA layers \citep{hu2022lora} to the q\_proj, k\_proj, v\_proj, o\_proj, up\_proj, down\_proj, and gate\_proj layers of LLaDA-8B-Instruct with a rank of 128, $\alpha$ of 64, and a dropout rate of 0.05. To reduce GPU memory consumption, we apply 4-bit quantization. We also apply a random mask on the prompt tokens with a probability of 0.15. The learning rate is set as 3e-6, $\beta$ is set as 0.04, and $\epsilon$ is 0.5. For GSM8K, MATH500, and Countdown, the temperature of on-policy sampling is set as 0.9, while the temperature for Sudoku is set as 0.3. During evaluation, the sampling temperature is 0, i.e., we apply greedy sampling on the test set. For both training and evaluation, the block size of LLaDA~\citep{nie2025large} is set as 32. In Figure \ref{fig:main_result_d2_traj}, we apply diffu-GRPO directly on LLaDA-8B-Instruct without SFT, and in Table \ref{tab:d2_traj_main_result}, the d1 results are copied from \citet{zhao2025d1}, with SFT included. We apply the same reward functions and drop the standard deviation denominator when computing advantages as in \citet{zhao2025d1}. In Table \ref{tab:d2_stepmerge_hyperparam}, we report the $N$ values, sequence lengths, inner iteration numbers, and maximum prompt lengths utilized for the four benchmarks. All hyperparameters are shared for both the \dtwoanyorder runs and the diffu-GRPO runs reported in Figure \ref{fig:main_result_d2_traj} to ensure fair comparison. In the Sudoku ablation, we reuse all the hyperparameter settings reported in \ref{fig:main_result_d2_traj} and only alter $N$.

\begin{table}
    \centering
    \caption{N, sequence length, innter iteration number, and max prompt length applied in the \dtwostepmerge experiments.}
    \label{tab:d2_stepmerge_hyperparam}
    \begin{tabular}{ccccc}
        \toprule
         & GSM8K & MATH500 & Countdown & Sudoku \\
        \midrule
         Seq. Length & 256 & 512 & 128 & 128 \\
         N & 8 & 16 & 16 & 16 \\
         Iteration Number & 12 & 12 & 8 & 8 \\
         Max Prompt Length & 200 & 512 & 200 & 200 \\
         \bottomrule
    \end{tabular}
\end{table}

\textbf{FLOP Estimation.}~We estimate computational consumption based on the operations executed during on-policy training loops. The total FLOP count for each loop consists of three components: on-policy sampling, likelihood evaluation for both the old and reference policies, and likelihood evaluation for the currect policy. We utilize the following estimation heuristics:

\begin{enumerate}
    \item \textbf{On-policy Sampling:} $L \times G \times T \times 2P$, where $L$ is the sequence length, $G$ is the effective group size, i.e., group size times GPU numbers, $T$ is the number of sampling steps, and $P$ is the parameter count.
    \item \textbf{Likelihood Evaluation for Old and Reference Policy:} $L \times G \times N \times 2P$ (per policy), where $N$ is the number of StepMerge segments.
    \item \textbf{Likelihood Evaluation for Current Policy:} $L \times G \times N \times 4P$.
\end{enumerate}

For sampling and likelihood evaluation, the multiplier is set to $2P$ per token, reflecting the cost of a forward pass with gradient computation disabled. For gradient updates, we adopt a multiplier of $4P$, accounting for the forward pass ($2P$) and activation backpropagation ($2P$) through the frozen backbone. We neglect the gradient overhead of the LoRA parameters, as they are negligible relative to the 8B parameters of the LLaDA backbone.

\section{Additional Experimental Results}

\subsection{\dtwoanyorder}

\subsubsection{Eso-LM}

\textbf{Main Result.}~In Figure \ref{fig:d2_ao_result}, we provide more data points collected in the Eso-LM toxicity steering experiment. We can see that the \dtwoanyorder curve strictly dominates the DDPO curve, demonstraing the efficacy of our proposed trajectory likelihood evaluation technique.

\begin{figure}{}{}  
  \centering
  \includegraphics[width=0.5\linewidth]{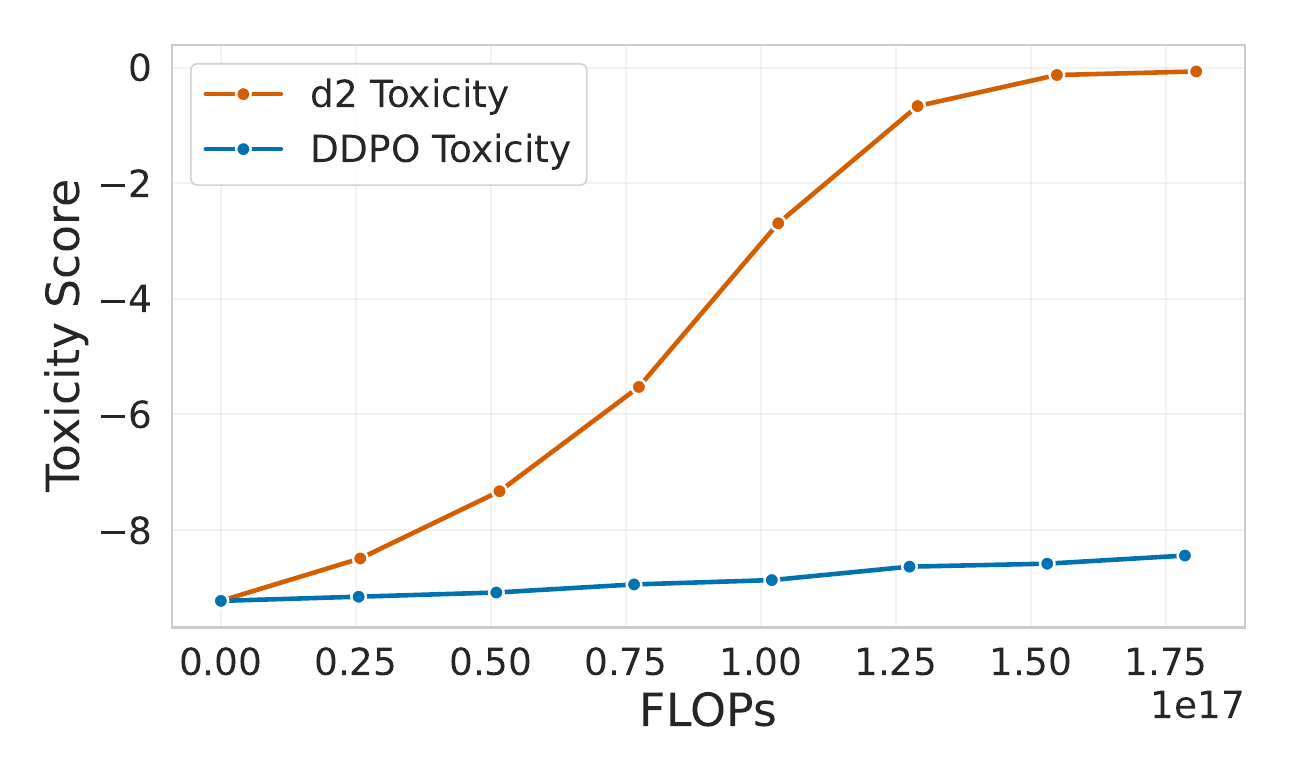}
  \captionsetup{skip=2pt}
  \caption{Toxicity steering results of \dtwoanyorder and DDPO on Eso-LM.}
  \label{fig:d2_ao_result}
  \vspace{-10pt}
\end{figure}

\textbf{Ablation.}~In Figure \ref{fig:d2_anyorder_stepmerge}, we provide more data points collected in the Eso-LM toxicity ablation experiment. We consistently observe a trend of \dtwoanyorder dominating \dtwostepmerge.  

\begin{figure}{}{}  
  \centering
  \includegraphics[width=0.5\linewidth]{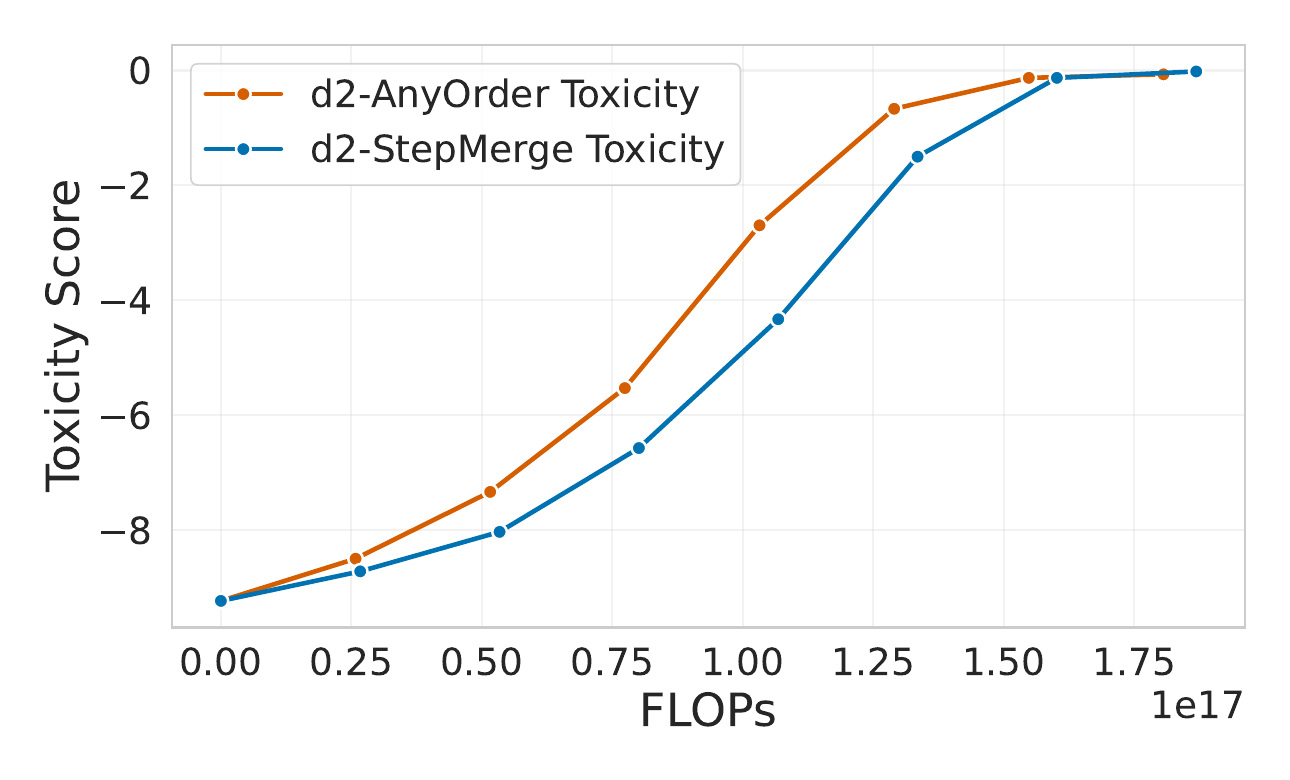}
  \captionsetup{skip=2pt}
  \caption{\dtwoanyorder v.s. \dtwostepmerge on toxicity steering.}
  \label{fig:d2_anyorder_stepmerge}
  \vspace{-10pt}
\end{figure}

\subsubsection{Any-Order Causal LLaDA}
\label{app:subsubsec:any_order_causal_llada_results}

\begin{table}
    \centering
    \caption{Benchmark performance of any-order causal LLaDA with and without the Intellect-SFT finetuning stage.}
    \label{tab:d2_anyorder_finetune}
    \begin{tabular}{ccc}
        \toprule
         Finetuning Setting & GSM8K & MATH500 \\
        \midrule
         GSM8K / MATH500 & 55.34\% & 16.80\% \\
         Intellect-SFT + GSM8K / MATH500 & \textbf{59.21\%} & \textbf{22.00\%}  \\
         \bottomrule
    \end{tabular}
    \vspace{-10pt}
\end{table}

\begin{figure}{}{}  
  \centering
  \includegraphics[width=0.5\linewidth]{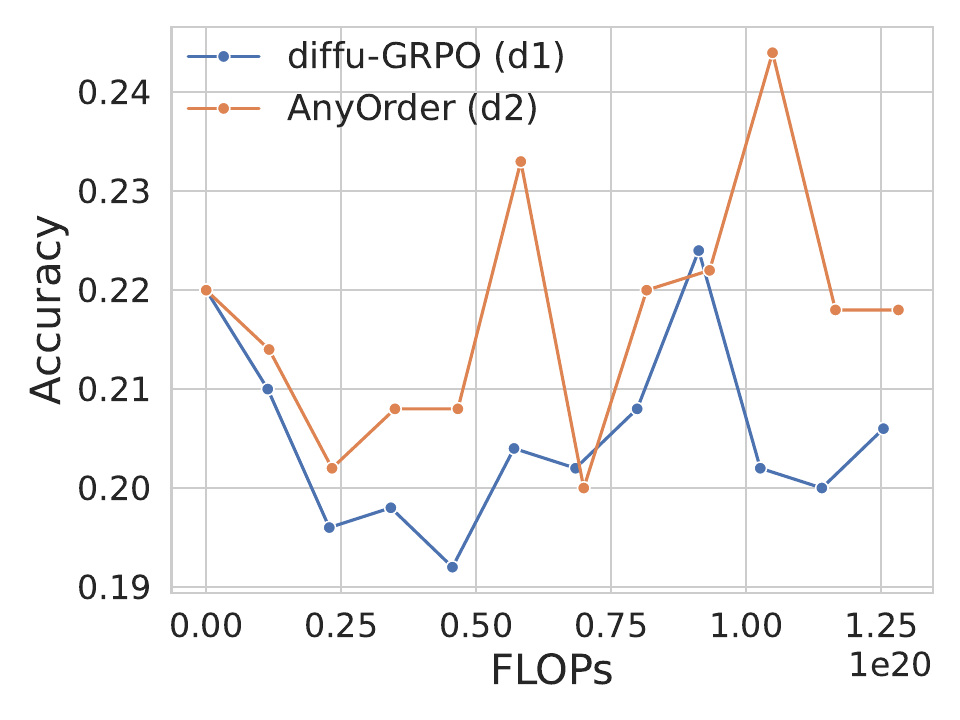}
  \captionsetup{skip=2pt}
  \caption{\dtwoanyorder v.s. diff-GRPO on the any-order causal LLaDA finetuned for MATH500.}
  \label{fig:d2_anyorder_math}
  \vspace{-10pt}
\end{figure}

\textbf{Finetune.}~As shown in Table \ref{tab:d2_anyorder_finetune}, applying the first finetuning stage on the Intellect-SFT data significantly increases the benchmark performance after the instruction-finetuning stage, demonstrating the advantage of increasing the diversity and scale of data that the model is exposed to during finetuning.

\textbf{RL.}~In Figure \ref{fig:d2_anyorder_math}, we present the performance-compute dynamics of \dtwoanyorder and diffu-GRPO on an any-order causal LLaDA checkpoint that we have on MATH500. \dtwoanyorder demonstrates a better trend to steer up the test set pass@1 than diffu-GRPO, further consolidating the efficacy of our proposed trajectory likelihood estimators.

\section{Assets}
In Table \ref{tab:datasets}, we list the datasets (and corresponding licenses, when available) used in this work.
In Table \ref{tab:software}, we list the software packages (and corresponding licenses) used in this work.

\begin{table}[ht]
    \centering
    \footnotesize
    \caption{Datasets (and corresponding licenses) used in this work.}
    \begin{tabular}{ll}
         \toprule
         Dataset & License \\
         \midrule
         \makecell[l]{OpenWebText \citep{Gokaslan2019OpenWeb}} & \makecell[l]{Creative Commons CC0 license (``no rights reserved'')}\\
         \makecell[l]{GSM8K~\citep{cobbe2021gsm8k}} & MIT \\
         \makecell[l]{MATH500~\citep{lightman2023let}} & Apache-2.0 \\
         \makecell[l]{INTELLECT-3-SFT~\citep{neginashz2024star-sft-intellect-3}} & Apache-2.0 \\
    \bottomrule
    \end{tabular}
    \label{tab:datasets}
\end{table}

\begin{table}[ht]
    \centering
    \footnotesize
    \caption{Software (and corresponding license) used in this work.}
    \begin{tabular}{ll}
    \toprule
        Library & License \\
        \midrule
        HuggingFace~\citep{wolf2019huggingface} & Apache 2.0 \\
        Hydra~\citep{Yadan2019Hydra} & MIT \\
        NumPy~\citep{harris2020array} & \href{https://numpy.org/doc/stable/license.html}{NumPy license} \\
        Matplotlib~\citep{Hunter:2007} & \href{https://matplotlib.org/stable/users/project/license.html}{Matplotib license} \\
        OmegaConf & BSD 3-Clause \\
        Pandas \citep{reback2020pandas} & BSD 3-Clause ``New" or ``Revised" \\        PyTorch~\citep{Paszke_PyTorch_An_Imperative_2019} & BSD-3 Clause \\
        Seaborn~\citep{Waskom2021} & BSD 3-Clause ``New" or ``Revised" \\
        TRL~\citep{vonwerra2022trl} & Apache-2.0 \\
        \bottomrule
        \end{tabular}
    \label{tab:software}
\end{table}


\end{document}